\theoremstyle{plain}
\newtheorem{theorem}{Theorem}[section]
\theoremstyle{definition}
\theoremstyle{remark}
\def\bTh{{\bf \Theta}}
\def\bPi{{\bf \Pi}}
\def\bnabla{\boldsymbol{\nabla}}
\definecolor{color_table}{rgb}{1.0, 0.44, 0.37}
\icmltitlerunning{BI for AI:  Energy Conserving Descent}
\begin{document}

\twocolumn[
\icmltitle{Born-Infeld (BI) for AI: 
           Energy-Conserving Descent (ECD) for Optimization}



\icmlsetsymbol{equal}{*}

\begin{icmlauthorlist}
\icmlauthor{G. Bruno De Luca}{equal,yyy}
\icmlauthor{Eva Silverstein}{equal,yyy}
\end{icmlauthorlist}

\icmlaffiliation{yyy}{Stanford Institute for Theoretical Physics, Stanford University, Stanford, CA, 94306, USA}

\icmlcorrespondingauthor{G. Bruno De Luca}{gbdeluca@stanford.edu}
\icmlcorrespondingauthor{Eva Silverstein}{evas@stanford.edu}

\icmlkeywords{Machine Learning, ICML}

\vskip 0.3in
]




\printAffiliationsAndNotice{\icmlEqualContribution} 

\begin{abstract}
We introduce a novel framework for optimization based on energy-conserving Hamiltonian dynamics in a strongly mixing (chaotic) regime and establish its key properties analytically and numerically.  The prototype is a discretization of Born-Infeld dynamics, with a squared relativistic speed limit depending on the objective function. This class of frictionless, energy-conserving optimizers proceeds unobstructed until slowing naturally near the minimal loss, which dominates the phase space volume of the system.  Building from studies of chaotic systems such as dynamical billiards, we formulate a specific algorithm with good performance on machine learning and PDE-solving tasks, including generalization. It cannot stop at a high local minimum, 
an advantage in non-convex loss functions, and proceeds faster than GD+momentum in shallow valleys.

\end{abstract}

\section{Introduction and Summary}\label{sec:intro}
Many scientific and technological problems, including machine learning (ML), require optimization, the process of evolving to or near to the minimum of a nontrivial loss function $F(\bTh)$ depending on a high dimensional space of parameters $\bTh$.   The loss function may be highly non-convex and/or contain shallow long valleys.  It traditionally works -- quite well in many cases -- via a form of gradient descent (GD), with many improvements derived over the years involving momentum and adaptive features\footnote{See e.g. \cite{bottou2018optimization} for a review and \cite{kingma2017adam, reddi2019convergence} for examples of adaptive optimization.}.  A standard example is (stochastic) GD with momentum \cite{polyak}  which we will denote (S)GDM.  In physics terms, these standard algorithms involve a noisy and discretized form of frictional evolution of a particle on a complicated potential energy landscape, without conservation of  total kinetic and potential energy $E$. More precisely, given an objective function $F(\bTh)$, one can define a \emph{potential energy}:
\begin{equation}\label{eq:V-F-shift-def}
    V(\bTh) \equiv F(\bTh) - \Delta V\:,
\end{equation}
where $\Delta V$ is a (possibly zero) constant shift  whose adaptive tuning we will address below. Through this map, optimization  can be formulated as (discretized) physical evolution from an initial point $\bTh_0$,  with the goal of reaching sufficiently low values of $V$. Standard methods correspond to friction-dominated evolution, relying on some form of dissipation of energy  to achieve this. 

In this work, we show that friction is not necessary: energy-conserving dynamics (ECD) provides a distinctive class of optimization algorithms with some favorable properties, including increased calculability  of its behavior, resulting from the $E$ conservation summarized in Table \ref{table:comparison}. The minimal version contains zero friction, and conserves energy, yet essentially stops moving near vanishing loss.  This may sound paradoxical on first glance, but it is straightforward.  One example which we focus on in this work is based on relativistic Born-Infeld (BI) dynamics, with the (squared) speed limit\footnote{Relativistic dynamics for optimization was considered previously in the interesting work \cite{RGD2020}, but there the speed of light was taken to be a constant and friction was introduced to enable the evolution to converge; see also related work in \cite{lu2016relativistic}.  Our approach was motivated by the mechanism \cite{Alishahiha:2004eh} for early-universe inflationary cosmology, a subject which also involves evolution on a potential landscape.} $c_{rel}$ depending on $V(\bTh)$ as: 
\begin{equation}\label{eq:SL-V-F-shift-def}
    c_{rel}^2 = V(\bTh) .
\end{equation}
As a consequence of this, the evolution essentially stops when $V(\bTh)\to 0.$
 As defined in \eqref{eq:V-F-shift-def},
$\Delta V$ represents a constant hyper-parameter sometimes needed to shift the objective to $V=0$; although adaptively tunable, we note that $\Delta V=0$ in practice for a number of standard problems including ML tasks with achievable vanishing loss objectives
and partial differential equation (PDE) solving with $F$ given by the summed squares of the equations.  When $V$ vanishes, so does the speed of motion in parameter space.  Another example is to take $mass=1/V$ for ordinary (non-relativistic) particle motion; again even without friction slowing it down, the system ultimately slows as $V\to 0, \text{mass}\to\infty$.

In various applications, literally achieving the globally minimal loss may lead to overfitting and failure of generalization.  For BI, we find that the speed limit kicks in soon enough to avoid this in  small benchmark experiments and our own synthetic PDE solving tests.   

Our {\bf contributions} are to introduce ECD, the frictionless energy-conserving class of optimizers with {\it mixing} dynamics (introduced in \S\ref{sub:mixing}), realized concretely with {\it BBI} (Algorithm \ref{alg:BBI}), derive its key properties and advantages (Table \ref{table:comparison}) and confirm them experimentally in synthetic loss functions, PDEs, MNIST and CIFAR.
\begin{table}[t]
\caption{ECD versus frictional optimizers.  Energy conservation improves calculational control  of optimization and generalization.
}
\label{table:comparison}
\vskip 0.15in
\begin{center}
\begin{small}
\begin{sc}
\begin{tabular}{ll}
\toprule
ECD & Friction ((S)GDM, \dots)   \\
\midrule
\rowcolor{color_table!15}Conserves energy E    & Friction drains E   \\
\rowcolor{color_table!5}Cannot get stuck  &  Can stop in high \\
\rowcolor{color_table!5} ~in high local minimum & ~local minimum \\

\rowcolor{color_table!15}Cannot overshoot & Can overshoot  \\
\rowcolor{color_table!15}~ $V=0=\bnabla V$ & $~V=0=\bnabla V$\\
\rowcolor{color_table!5}Depends on $V$ and $\nabla V$     & depends only on $\bnabla V$ \\
\rowcolor{color_table!15}On shallow region:    & On shallow region:          \\
\rowcolor{color_table!15}$~\theta\sim e^{-mt/\sqrt{2}}~~$\eqref{eq:bi-shallow} & $~\theta \sim e^{-m^2 t/f}~~$ \eqref{eq:gd-nomom-shallow}   \\
\rowcolor{color_table!5} Analytic prediction      &  Stochastic intuition  \\
\rowcolor{color_table!5}$~~$  for distribution      & $~~$  for distribution         \\
\bottomrule
\end{tabular}
\end{sc}
\end{small}
\end{center}
\vskip -0.1in
\end{table}
\section{Speed-limited Energy Conserving Frictionless Hamiltonian Dynamics}\label{sec:continuum-dynamics}

In physical language, GDM is a discretized version of particle motion with friction on a potential energy function $ V(\bTh)$.  The motion in the potential is governed by Newton's laws of motion, force = mass $\times$ acceleration, $-\bnabla V-f\dot\bTh = m\ddot\bTh  $ with coefficient of friction $f$.  
This is equivalent to the first-order form $\mathbf{p} = m\dot\bTh ,  \dot{\mathbf{p}} = -\mathbf{p} f /m-\bnabla V $ in terms of the position $\bTh$ and momentum $\mathbf{p}$, whose appropriate discretization leads to GDM algorithm. Friction leads to violation of energy conservation for the particle (physically, transfer of energy to another sector). 

In relativistic particle mechanics, the evolution equations are different, with bounded
speed.
For squared speed limit $V$ one finds frictionless,  energy-conserving continuum evolution equations given in terms of positions $\theta_i$ (components of $\bTh$) and momenta $\pi_i\equiv \frac{\dot\theta_i}{\sqrt{1-\frac{\dot{\bTh}^2}{V}}}$ (components of the momentum vector $\bPi$):

\begin{equation}\label{eom}
\dot\theta_i = \pi_i \frac{V(\bTh)}{E} , ~~
\dot\pi_i =-\frac{\partial_i V}{2}\left(\frac{E}{V}+\frac{V}{E}\right)
\end{equation}
written in terms of the conserved energy $E$
\begin{equation}\label{eq:Econs}
E=\frac{V}{\sqrt{1-\frac{\dot{\bTh}^2}{V}}}=\sqrt{V(V+\bPi^2)} = \text{constant}. ~~~
\end{equation}
These equations are concisely derived from the appropriate physical {\it action principle}\footnote{For a succinct introduction to this general formalism and the role of energy conservation classical physics, see e.g. lectures 6-8 of \cite{Susskindminimum}  or \cite{landau1976mechanics} Chapters 1,2,7.}  in Appendix \ref{app:derive-eoms-S-H} starting from the action $S=-\int dt V(\bTh)\sqrt{1-\frac{\dot{\bTh}^2}{V(\bTh)}}$ \cite{BornInfeld, Silverstein:2003hf}. $E$ conservation ($dE/dt=0$)  in the continuum theory follows from applying \eqref{eom} to a time derivative of the third expression in \eqref{eq:Econs}.  

This implies a first-order discrete update rule 
\begin{equation}\label{eomdiscrete2}
\pi_i(t+\Delta t)-\pi_i(t) = -\Delta t\frac{\partial_i V(\bTh(t))}{{2}}\left(\frac{E}{V}+\frac{V}{E}\right)
\end{equation}

\begin{equation}\label{eomdiscrete1}
\theta_i(t+\Delta t)-\theta_i(t) = \Delta t~\pi_i(t{+\Delta t}) \frac{V(\bTh(t))}{E}
\end{equation}
As we will see more in detail below, noise (minibatches) correspond to a time-dependent sequence of jumps in $V(\bTh, t)$. 

This will serve as a starting point for our algorithm, although we will be led to include more features beyond the minimal rule \eqref{eomdiscrete2}-\eqref{eomdiscrete1}.  
Specifically, we will rescale $\bPi$ to restore energy conservation  once violated by the discrete updates -- even in the noisy case of minibatches -- and we will include options for randomized bounces at fixed $E$ to avoid stable or quasistable orbits in the motion.     

Energy conservation immediately implies 
\begin{theorem}
\label{thm:not-stuck}
If $V\ne 0$ and $V\ne E$ then $\dot\bTh\ne 0$ in the continuum evolution, and $\bTh(t+\Delta t)\ne \bTh(t)$ for the discrete algorithm. 
\end{theorem}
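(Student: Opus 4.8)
The plan is to read the magnitude of the momentum directly off the conserved energy and show it cannot vanish under the stated hypotheses, which forces motion in both the continuum and the discrete settings. For the continuum evolution, I would square the middle expression of \eqref{eq:Econs}, giving $E^2 = V(V+\bPi^2)$, and solve for the momentum magnitude:
\begin{equation}
\bPi^2 = \frac{E^2 - V^2}{V} = \frac{(E-V)(E+V)}{V}.
\end{equation}
In the physical regime where the speed limit $c_{rel}^2 = V$ is real and nonnegative we have $V>0$ (using $V\neq 0$), and from $E^2 = V^2 + V\bPi^2 \geq V^2$ we get $E\geq V$; the hypothesis $V\neq E$ upgrades this to $0<V<E$, so $\bPi^2>0$ and hence $\bPi\neq 0$. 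Substituting into the first equation of motion in \eqref{eom}, $\dot\theta_i = \pi_i V/E$, and noting the scalar prefactor $V/E$ is nonzero (since $V\neq 0$, $E\neq 0$), I conclude $\dot\bTh = \bPi\,V/E \neq 0$. Equivalently, one can substitute straight into \eqref{eq:Econs} to obtain $\dot\bTh^2 = V(1-V^2/E^2) = V(E-V)(E+V)/E^2$, whose only physical zeros are $V=0$ and $V=E$, yielding the same conclusion by contraposition.

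For the discrete algorithm I would examine the position update \eqref{eomdiscrete1}, namely $\bTh(t+\Delta t)-\bTh(t) = \Delta t\,\bPi(t+\Delta t)\,V(\bTh(t))/E$. With $\Delta t\neq 0$, $V(\bTh(t))\neq 0$, and $E\neq 0$, this increment vanishes if and only if $\bPi(t+\Delta t)=0$. To rule that out I would invoke the energy-restoring rescaling of $\bPi$ that the algorithm performs at each step (discussed above): it guarantees \eqref{eq:Econs} holds exactly at $t+\Delta t$, so the same computation as in the continuum gives $\bPi(t+\Delta t)^2 = (E^2 - V^2)/V > 0$ whenever $V\neq 0,E$. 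Hence $\bPi(t+\Delta t)\neq 0$ and $\bTh(t+\Delta t)\neq\bTh(t)$.

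I expect the discrete case to be the only real obstacle. The raw updates \eqref{eomdiscrete2}--\eqref{eomdiscrete1} do not by themselves conserve energy, so one cannot read $\bPi(t+\Delta t)$ off \eqref{eq:Econs} without first appealing to the rescaling step; indeed, absent that step the momentum update could in principle produce $\bPi(t+\Delta t)=0$ by exact cancellation against $\bPi(t)$. The argument therefore hinges on stating precisely that energy conservation is enforced at each iteration and on being careful about which position, and hence which value of $V$, enters the conserved relation at the endpoint of the step. Once that bookkeeping is fixed, the result is immediate from the sign of $(E^2-V^2)/V$.
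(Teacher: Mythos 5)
Your proof is correct and follows essentially the same route as the paper's: in the continuum you read off from the conserved energy \eqref{eq:Econs} that $\dot\bTh=0$ with $V\neq 0$ forces $E=V$, and in the discrete case you combine the position update \eqref{eomdiscrete1} with the energy-restoring rescaling of $\bPi$ to rule out stalling unless $V=0$ or $V=E$. Your version is somewhat more explicit about the algebra ($\bPi^2=(E^2-V^2)/V$) and about exactly where the rescaling is needed, but the underlying argument is the same.
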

\begin{proof} 
From \eqref{eq:Econs} if $\dot\bTh=0$ and $V\ne 0$ then $E=V$ in the continuum. 
This extends to the discrete case since the algorithm restores $E$ by rescaling ${\bf \Pi}$ if numerical errors in energy conservation build up as a result of the discreteness. Specifically, 
the update rule (5), together with the fact that $E$ is constant, makes it impossible to obtain ${\bf\Theta}_{t+1} = {\bf \Theta}_{t}$ (stalling on parameter space) 
unless $V=0$ or $V=E, {\bf \Pi}=0$.  (The latter only persists to later steps (4) if $\partial V=0$ with $V=E$ at initialization, a set of measure 0 easily avoided with initial energy $\delta E$.)        
\end{proof}
Therefore, initialization with $E\ge V\gg V_{\text{objective}}$ ensures the impossibility of getting stuck in a high local minimum with $V\gg V_\text{objective}$; illustrated here for BI, this extends to ECD in general.  

In various problems, shallow directions in the loss landscape are important \cite{li2018measuring, Fort_Scherlis_2019, draxler2019essentially}.  Here also there is a contrast between ECD (e.g. BI) and GDM as well as their stochastic versions.  To get a sense for this, consider a simple shallow direction $V=\frac{1}{2}m^2\theta^2$ (with $m^2$ the smallest eigenvalue of the Hessian).  Gradient descent alone, or friction dominated motion generally, follows the solution
\begin{equation}\label{eq:gd-nomom-shallow}
    \dot{\bTh} \simeq -\frac{\bnabla V}{f}\to \frac{m^2\bTh}{f}, ~~~ \bTh \sim e^{-m^2 t/f}
\end{equation}
with $f$ the friction coefficient.  BI on the other hand satisfies the speed limit 
\begin{equation}\label{eq:bi-shallow}
    |\dot{\bTh}|\le \sqrt{V} \to m \theta/\sqrt{2}, ~~~ \bTh\sim e^{-mt/\sqrt{2}} 
\end{equation}
For a fixed value of the friction $f$, as we reduce $m$ so that the potential becomes more shallow, the GD speed decreases quadratically while the BI speed decreases only linearly, thus moving faster toward the objective.  We find this distinction survives in the discrete hyper-parameter optimized experiments detailed below.      

A related distinction has to do with overshooting $V=0$.  In the continuum, it is impossible for BI to do this since it would violate the speed limit, whereas GDM can overshoot and oscillate (or leave the $V=0$ basin entirely).   The discretized version can overshoot $V=0$ by a small amount, at which point the algorithm optionally either stops the evolution or adaptively adjusts $\Delta V$ if desired (see below).  

If initialized at zero velocity ($\dot\bTh=0$), relativistic effects are negligible to begin with:  in that case the Born-Infeld dynamics starts out like a non-relativistic particle exploring the $V$ landscape without friction. Even without friction, the speed limit ensures that it slows dramatically near $V=0$.

\section{Mixing and Chaos:  ECD with Dispersing Elements}\label{sec:chaos}

To avoid ECD (e.g. BI) getting into long-lived orbits high on the loss landscape $V$, we exploit the analytic power of $E$ conservation and import results from the field of Hamiltonian dynamical systems.  This also enables predictions for the late-time distribution of ECD trajectories.  

\subsection{Mixing and Distributions}\label{sub:mixing}
The \emph{phase space} of a physical system is the space of accessible momenta $\bPi$ and positions $\bTh$.   A chaotic system exhibits sensitive dependence on initial conditions, enabling initially closeby trajectories to explore very different regions of the phase space; in its strongest forms  most periodic orbits are unstable. 
For exponentially mixing systems, 
after a finite timescale $t_\text{mix}$, the probability to find a particle -- sampled from those in a small initial droplet -- in a particular region $R$ of the phase space is given to good approximation by the volume (`measure') of $R$.
\footnote{A canonical example is the mixing of milk in coffee.}
This volume is parametrically large as $V\to 0$ for BI.  The total phase space volume for an $n$-dimensional parameter space is given by
\begin{equation}\label{psvol}
\text{Vol}({\cal M}) = \int d^n \pi d^n\theta \delta(\sqrt{V(V+\bPi^2)}-E)
\end{equation}
 Integrating over $\bPi$ yields eq.\eqref{psvolIII}, concentrated near small $V$.
Near a minimum, $V$ is quadratic, giving (after an orthogonal diagonalization of its Hessian)
$
V\simeq V_I + \frac{1}{2}\sum_{i=1}^n m_{Ii}^2 (\theta_i-\theta_{Ii})^2
$.
Evaluating this for a basin $|\theta_{Ii}-\theta_i|<1/m_{Ii}$ at fixed $n$ yields (see \S\ref{app:ps-vol-details}) 
\begin{equation}\label{eq:vol-I-maintext}
\text{Vol}({\cal M}_I)\to b_n \left(\frac{2\pi^{n/2}}{\Gamma(n/2)}\right)^2\frac{E^{n-1}}{\prod_i m_{Ii}}\log(V_I) ~~~~ V_I\to 0, 
\end{equation}
where $b_n$ is a constant.
Since $V_I\to 0$ dominates in this formula, mixing with a small enough value of $t_\text{mix}$ is a sufficient condition for successful optimization in this framework.  Since the probability of being in a region of phase space is proportional to its volume, \eqref{eq:vol-I-maintext} yields an analytic prediction for the distribution of results over multiple low-lying minima which we test experimentally in \S\ref{sub: two-basins}.  Eq. \eqref{eq:vol-I-maintext}\ and the results \S\ref{sub: two-basins} exhibit a preference for low-lying flat minima; on these BBI evolves faster than GD \eqref{eq:gd-nomom-shallow}-\eqref{eq:bi-shallow}. Other ECD models, e.g. $V\to g(V)$, may enhance this effect.  In addition to its use for optimization, the formula enables reverse-engineering an ECD model whose trajectories sample from a desired distribution \cite{sampling}. 

A priori we do not know whether the optimization algorithm combined with the loss function $V$ constitutes a strongly chaotic, exponentially mixing system (although this may be typical), and if so what its value of $t_\text{mix}$ is.  Strong mixing arises in dynamical billiards, see e.g. \cite{Chernov06chaoticbilliards, 2017arXiv170102955S}.  Nearby trajectories bouncing off a billiard ball disperse, leading to sensitive dependence on initial conditions. Inspired by this,
we introduce randomized momenta at fixed $\bPi^2$ in our algorithm (cf. Sec. \ref{sec:algo}).\footnote{  Hamiltonian sampling \cite{2017arXiv170102434B, hanada2018markov} also involves randomized momenta, in general not conserving energy.}  

We stress that the utility of this result depends on the timescale $t_\text{mix}$.  Consider a system consisting of a box with an arbitrarily tiny hole to a much larger (or even infinite) domain beyond the box. Although the latter has more phase space volume, the time for a random trajectory to hit the hole may be arbitrarily long.  So the diverging phase space volume of a region $R$ is not by itself enough to guarantee that the system ends up in $R$ after a small timescale.  However, this analogy appears too pessimistic for our case:  the gradients introduce forces toward the minimum, and the speed limit on motion is weaker for larger loss $V$, so the system moves relatively quickly through such regions, spending more time near $V=0$.   Relatedly, if we endow frictionless momentum (non-relativistic particle motion) with bounces leading to chaos, its version of \eqref{eq:vol-I-maintext} is \emph{not} dominated by small $V$ \eqref{eq:volMom}.
This method is based on well developed statistical mechanics intuition and empirical observation, but does not come with a general theoretical guarantee, though it would be interesting to build from the continuum analysis \eqref{eq:gd-nomom-shallow}-\eqref{eq:bi-shallow} to seek guarantees in the convex setting though this may be complicated by the nonlinear updates.
Typically, the determination of $t_\text{mix}$ for a complicated system requires experiment, and we will test our intuition that way.

\subsection{The Effect of Noise (Minibatches)}\label{sec:minibatches}

Our algorithm will work similarly with or without the use of minibatches, for which the input is a subset of the data updated every $N_\text{batch}$ steps.  With minibatches, the new effect on the dynamics is that the potential $V(\bTh, t)$ becomes a function of time separately from its dependence on $\bTh$.  We retain our prescription of explicitly restoring the initial energy $E$ even when  minibatches are used; this is a deviation from the motivating physics model, for which explicit time dependence generically leads to a lack of energy conservation.  For a loss of the form $\Delta V+ V=\Delta V+\sum_\text{batches} F_\text{batch}$  with $F_\text{batch} \ge 0~~\forall$ batches, Theorem \ref{thm:not-stuck} continues to apply.  

We will explore aspects of the behavior of BI with mini-batches at greater length in \S\ref{app:stochastics}, both from this deterministic (but time-dependent) perspective, and from the perspective of appropriate averaged stochastic dynamics; the speed limit $\dot\bTh^2 < V$ constrains the variance of $\dot\bTh$.

\subsection{The BBI Algorithm}\label{sec:algo}

We now combine the ingredients introduced so far and describe our detailed optimization algorithm. As explained in Sec.~\ref{sec:continuum-dynamics}, the starting point is given by the the continuum equations \eqref{eom}. The first step is their discretization, which we perform using a first order \emph{symplectic method}.\footnote{\emph{Symplectic methods} are integration methods that preserve the geometric structure of Hamiltonian dynamics. They are relevant for our discussion since this also ensures that phase space volumes are preserved. See e.g. \cite{hairer2006geometric} for a review.} This ensures that at this stage volume density in phase space is preserved (cf. \cite{RGD2020}, Sec.~3 ), and results in the update rules
\begin{equation}\label{eq:update-pi}
\bPi_{k+1} =\bPi_k - \frac{1}{2}\bnabla V_k \Delta t\left(\frac{E}{V_k}+\frac{V_k}{E}\right)
\end{equation}
\begin{equation}\label{eq:update-th}
\bTh_{k+1} = \bTh_k+ \bPi_{k+1} \Delta t  \frac{V_k}{E}\;,
\end{equation}
where the subscript $k$ refers to the $k$-th iterative step, and $V_k = V(\bTh_k, t_k)$ (with the explicit $t$-dependence applicable in the case with minibatches discussed in \S\ref{sec:minibatches}\ and \S\ref{app:stochastics}). The energy $E$ is a constant defined as
\begin{equation}\label{eq:enIn}
E \equiv V_0+ \delta E   \;,
\end{equation}
where $V_0$ is the initial value of the (shifted) objective function \eqref{eq:SL-V-F-shift-def} at the beginning of the evolution and $\delta E\ge 0$ is an optional extra hyperparameter. Its primary role is to add an initial energy to overcome possible energy barriers higher than the initial value of the potential, which is useful in highly non-convex problems, such as the Ackley function discussed in Sec.~\ref{sub: ackley}. It can also introduce more chaos in the system to help distribute the results.

As shown in Theorem \ref{thm:not-stuck}, the dynamics just described stops changing $\bTh$ appreciably only when $V=F-\Delta V\to 0$ where $\Delta V$ is the hyperparameter defined in \eqref{eq:SL-V-F-shift-def}.  
This acts both as threshold for the required accuracy of the solution and as a shift in case the expected optimum $F(\bTh)\neq 0$.  

 In fact an adaptive tuning of $\Delta V$ is possible, added as an option to the altorithm and tested in computational chemistry experiments \cite{chemistry}. Given a too-high initial guess, the loss extends to $V = F-\Delta V <0$ and the trajectory will jump to a small negative value $V<0$ due to the discreteness. Conditioned on this, $\Delta V$ may be lowered; this iteratively tunes it.

The vector $\bTh$ is initialized by choosing an initial point in parameter space (e.g. a standard initialization of a Neural Network (NN)). The vector $\bPi$ can be initialized in various ways consistent with energy conservation. A natural choice is to initialize it in the direction of (minus) the initial gradient,
\begin{equation}
    \bPi_0 \equiv -\frac{\bnabla V(\bTh_0)}{| \bnabla V(\bTh_0|} \sqrt{\frac{E^2}{V_0}-V_0},
\end{equation}
with norm satisfying \eqref{eq:Econs}. Another possibility is to initialize it in a random direction.  Notice that if $\delta E = 0$,  $\bPi_0$ is always initialized to zero, as is common in optimization.

The optimizer described so far implements the BI dynamics, but does not yet contain the extra features encouraging chaotic mixing as discussed in \S\ref{sub:mixing}.   This can be achieved by introducing random bounces, which conserve energy by keeping $\bPi^2$ the same in the process. Such bounces can be easily implemented by generating a new random $\bPi$ with the same norm as the original $\bPi$.  We implement these in two ways:
(i) $N_b$ fixed bounces separated by $T_0$ timesteps, and (ii) A variable number of progress-dependent bounces that are performed if for $T_1$ timesteps there is no progress in the search of the minimum, e.g. if it fails to reach a value of $V$ smaller than all those previously seen.
We will refer to this Bouncing BI dynamics as \emph{BBI}.

As a final ingredient, we include a rescaling of $\bPi$ that restores possible energy conservation violations due to numerical and discretization effects and minibatch transitions.\footnote{Recall from \S\ref{sec:continuum-dynamics} that the continuum dynamics conserves energy exactly in the noise-free case.}
This is performed by computing the current value of $\bPi_k^2$ at each step and comparing it with $ \sqrt{ \frac{E^2}{V_k}-V_k }$, the value it should have at step $k$. If there is a discrepancy, we homogeneously rescale all the components of $\bPi_k$ to restore $\bPi_k^2$ to the appropriate value. The rescaling is not performed if the quantity in the square root happens to be negative, something that could happen very early in the evolution due to small numerical errors.
This mechanism, not entirely faithful to the continuum dynamics, introduces small changes in the phase space measure. We expect this effect to be small and not appreciably affect the analytic estimates,  confirming this empirically in quantitative experiments in Sec.~\ref{sub: two-basins}.
Algorithm \ref{alg:BBI} summarizes BBI.
\begin{algorithm}[htb!]
   \caption{\emph{BBI}. $\varepsilon_1$ and $\varepsilon_2$ are numerical constants ensuring stability (good default values are $\varepsilon_1 = 10^{-10},$ $\varepsilon_2 = 10^{-40}$). The hyperparameter $\Delta V$ can be adaptively tuned as explained in the main text. See also the code for a preliminary working implementation. }
   \label{alg:BBI}
\begin{algorithmic}
   \REQUIRE $\Delta t$: Stepsize
   \REQUIRE $\Delta V$: Shift hyperparameter.
   \REQUIRE $\delta E$: Extra initial energy.
   \REQUIRE $T_0, T_1, N_b$: Bouncing thresholds and \# of fixed bounces.
   \REQUIRE $F(\bTh)$: Function to minimize.
   \REQUIRE $\bTh_0$: Initial parameter vector.
    \STATE $\{c_0, c_1, n_b\} \gets \{0,0,0\}$ (Initialize counters for bounces)
    \STATE $V\gets F(\bTh_0) - \Delta V$ (Initialize $V$)
    \STATE $E \gets V +\delta E$ (Initialize energy)
    \STATE $\bPi_0 \gets - \frac{\bnabla F (\bTh_0)}{|\bnabla F (\bTh_0)|} \sqrt{\frac{E^2}{V}-V}$ (Initialize momenta)
    \STATE $t\gets 0$ (Initialize timestep)
\WHILE{$V> \varepsilon_2 $}
    
    \IF {$c_0 \neq T_0 $ \AND $c_1 \neq T_1$}
    
    \STATE $t\gets t+1$
    \STATE $\pi_C^2 \gets V(\frac{E^2}{V^2}-1)$ (Correct value of $\bPi^2$)

    \IF{ $|\bPi_{t-1}^2-\pi_C^2|< \varepsilon_1$ \OR $\pi_C^2< 0$ }
    \STATE $\alpha \gets 1$
    \ELSE
    \STATE $\alpha \gets \sqrt{\frac{\pi_C  ^2}{\bPi_{t-1}^2}}$ (Factor to restore energy cons.)
    \ENDIF
    
    \STATE $\bPi_{t} \gets \alpha \bPi_{t-1}$ (Restore energy conservation)
    
    \STATE $\bPi_{t} \gets \bPi_t-\frac{1}{2} \Delta t (\frac{V}{E}+\frac{E}{V})\bnabla F(\bTh_{t-1})$ 
    \STATE $\bTh_{t} \gets \bTh_{t-1}+\Delta t \frac{V}{E} \bPi_{t}$
    
    \STATE{$c_0\gets c_0+1$}
    \STATE{$c_1\gets c_1+1$}
    
    \IF{\emph{new minimum}}
    \STATE $c_1\gets 0$
    \ENDIF
    \STATE $V\gets F(\bTh_t) - \Delta V$
    \ELSE
    \STATE $\bPi_N\gets \emph{random vector}$ (Generate random vector)
    \STATE{$\bPi_t \gets \bPi_N \sqrt{\frac{\bPi_t^2}{\bPi_N^2}}$} (Bounce momenta cons. $\bPi^2$)
    \IF{$c_0 = T_0$}
    
    \STATE $n_b\gets n_b+1$
    \IF{$n_b < N_b$}
    \STATE{$c_0 \gets 0$}
    \ELSE
    \STATE $c_0\gets c_0+1$
    \ENDIF
    \ENDIF
    \STATE $c_1 \gets 0$
 \ENDIF
 \ENDWHILE
\end{algorithmic}
\end{algorithm}
\section{Experiments}
In this section, we empirically confirm the theoretically predicted behavior of BBI as an example of (S)ECD, and compare it to (S)GDM, studying both the noise-free and noisy (stochastic) cases.  We do not systematically compare to adaptive optimizers such as Adam \cite{kingma2017adam}, leaving such features in BBI to future work.  In \S\ref{sub: ackley}-\ref{sub: two-basins} we analyze synthetic functions useful for testing optimization algorithms, illustrating the theory derived in previous sections and testing BBI against GDM.  In \S  \ref{sec:PDEs}-\ref{sec:ML-problems} we present richer, high-dimensional experiments in PDE solving and ML (MNIST \cite{mnist} and CIFAR \cite{cifar}).  These include minibatches, enabling us to confirm the robustness of BBI in that setting.

We ran the synthetic experiments and MNIST on standard laptop CPUs, while for CIFAR and the PDEs we used two GPUs. More details, including the full source code and sample results, can be found at \href{https://github.com/gbdl/BBI}{https://github.com/gbdl/BBI}.

\begin{figure*}[!ht]
\centering
\includegraphics[width=0.25\linewidth]{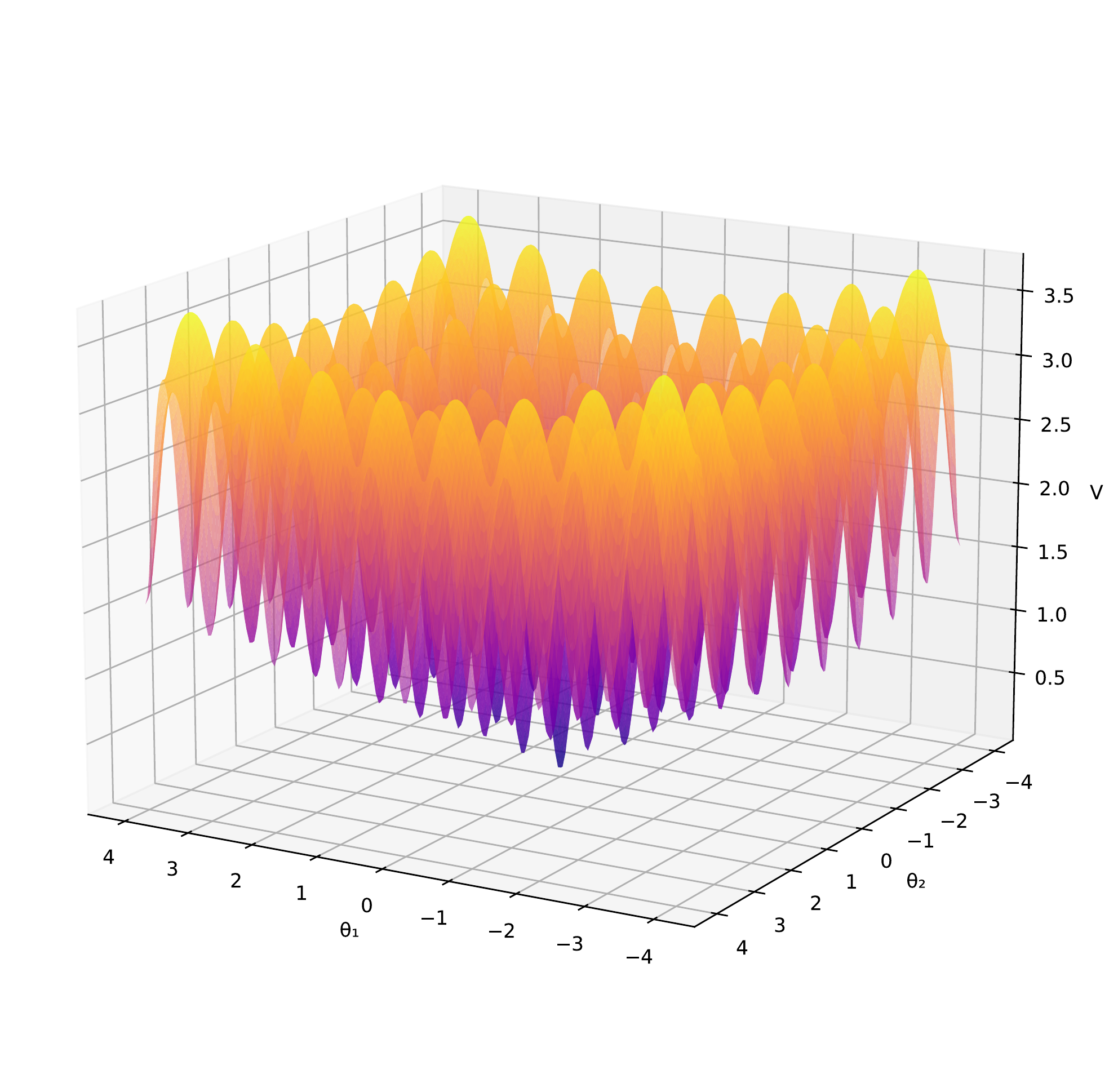}
\hfill
\includegraphics[width=0.25\linewidth]{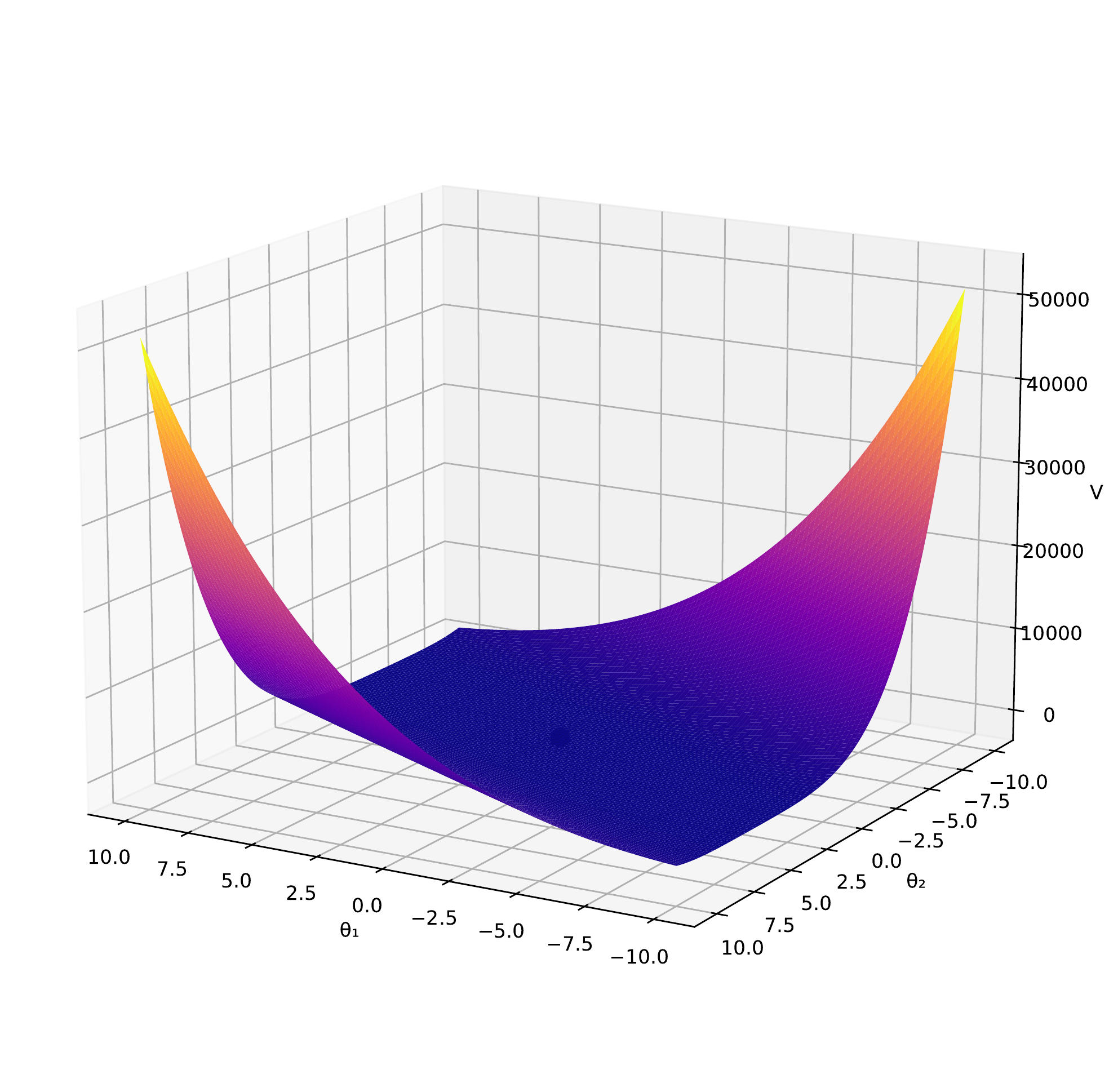}
\hfill
\includegraphics[width=0.25\linewidth]{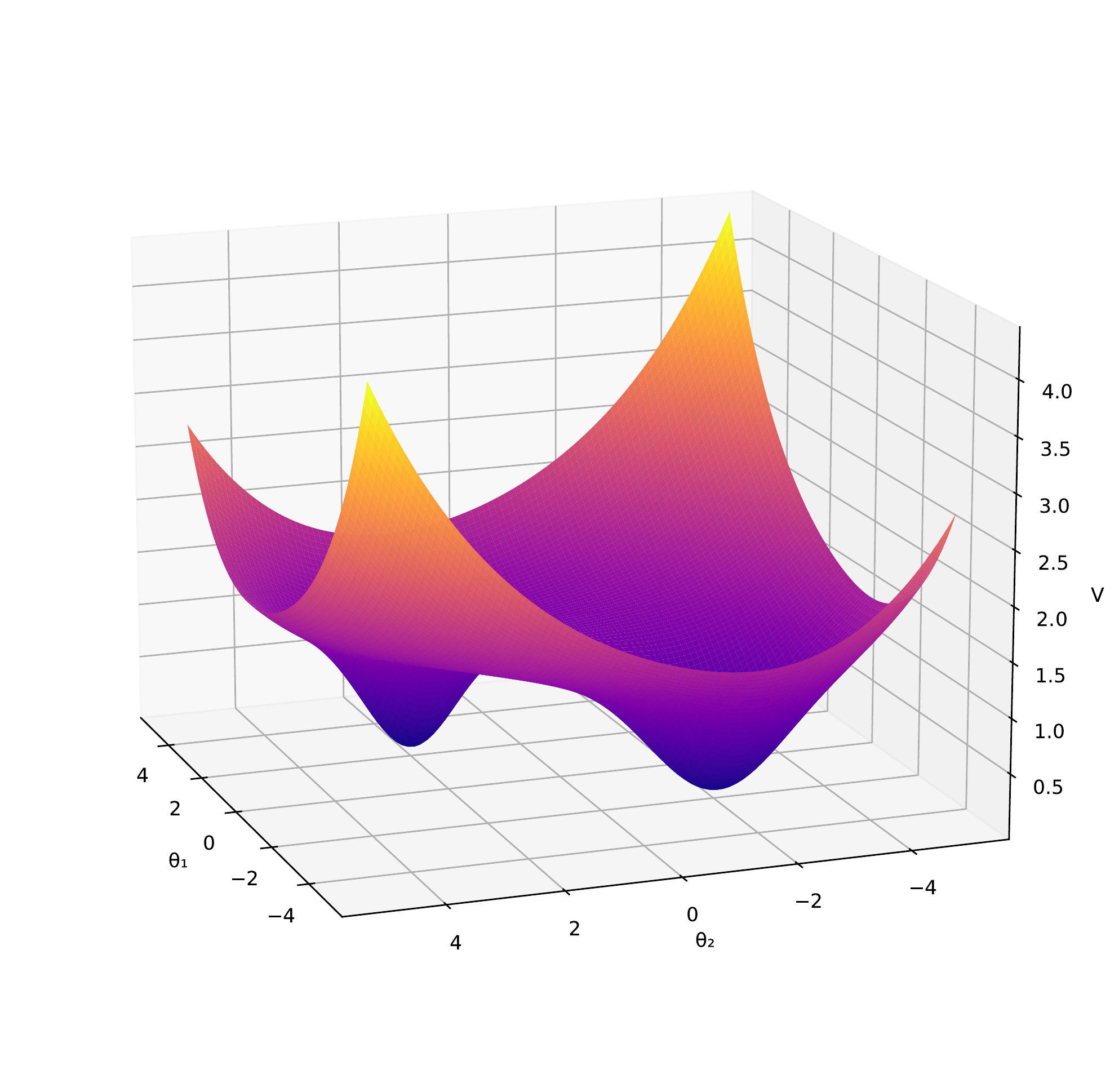}
\hfill
\caption{Synthetic benchmark functions (depicted in 2 dimensions). A highly non-convex function (left), a function with shallow valleys (center) and a simple multi-basin function (right).}
\label{fig:synthetic}
\vskip -0.2in
\end{figure*}

\subsection{Highly Non-Convex Landscapes.}\label{sub: ackley}

In this section we consider highly non-convex functions, focusing on the hard-to-optimize Ackley function \cite{ackley}, plotted in Fig.~\ref{fig:synthetic} (left) and defined as
\begin{multline} \label{eq:ackley}
    F(\theta_1, \theta_2) \equiv -20 \exp\left[-0.2\sqrt{0.5\left(\theta_1^{2}+\theta_2^{2}\right)}\right]+\\ 
 -\exp\left[0.5\left(\cos 2\pi \theta_1 + \cos 2\pi \theta_2 \right)\right] + e + 20\,.
\end{multline}

We test the BBI optimizer on this function, to check its ability to escape the many local minima and to compare it with friction-based methods. A typical evolution is displayed in Fig.~\ref{fig:ackley-2d-BBI}.
Starting from a fixed random point, we observe:
\begin{itemize}
    \item If $\delta E$ is not large enough to escape the initial local minimum, the particle keeps oscillating indefinitely. This is consistent with conservation of energy.
    \item When $\delta E$ is above a certain threshold, the particle escapes the first local minimum and keeps moving, eventually discovering the global minimum at $\theta_i=0$.
\end{itemize}

\begin{figure}[ht]
\vskip 0.2in
\begin{center}
\centerline{\includegraphics[width=.65\columnwidth]{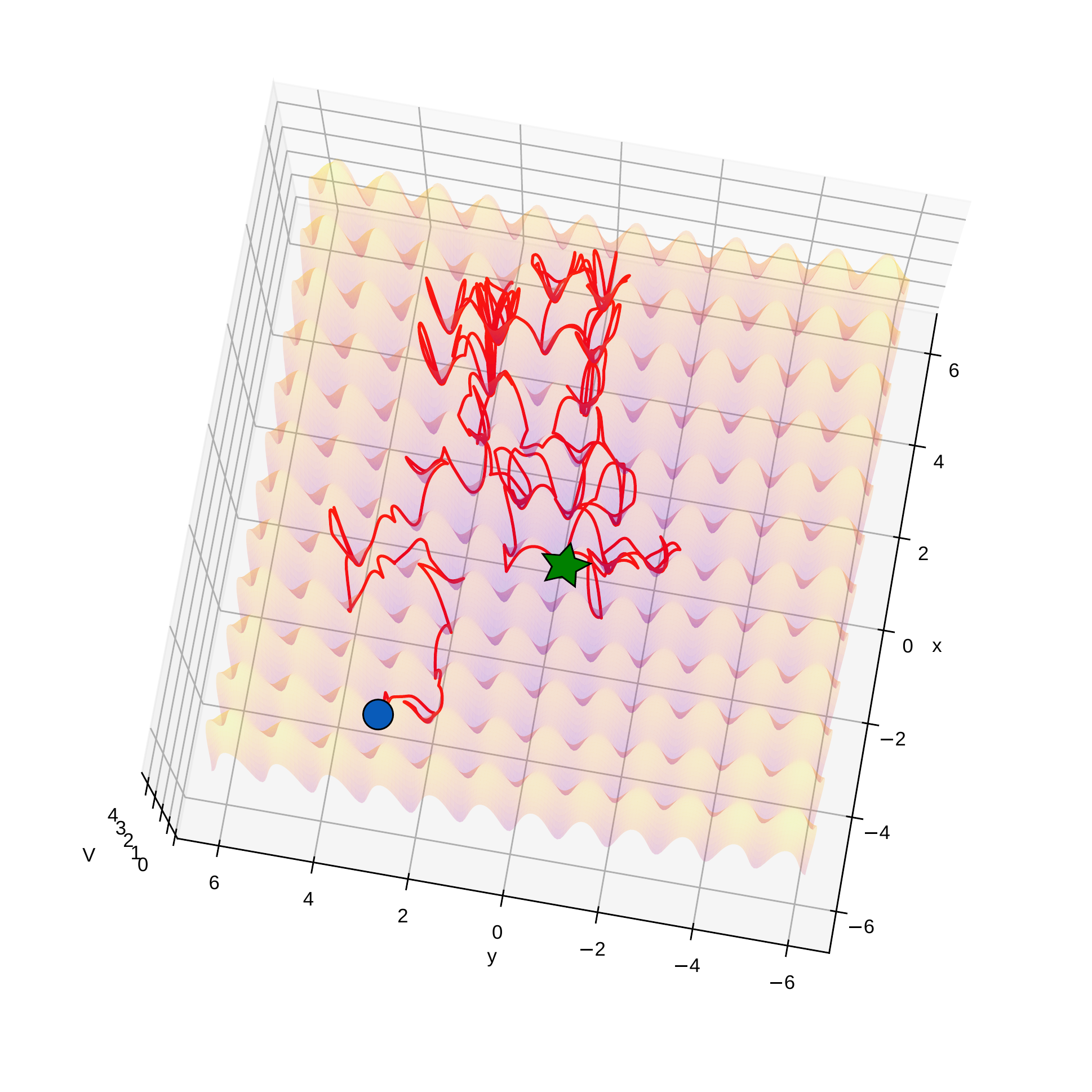}}
\caption{A typical BBI trajectory (starting from the circle) explores the rich landscape, stopping only at the global minimum (star).}
\label{fig:ackley-2d-BBI}
\end{center}
\vskip -0.2in
\end{figure}

This is to be contrasted with friction based optimizers, such as GDM. 
In that case, we observe three regimes:
(i) with a small stepsize, the evolution is not able to escape from the first local minimum it finds, (ii) with higher stepsizes, the optimizer jumps out of the first local minimum and keeps exploring the landscape erratically. Doing so it might be able to get somewhat close the the global minimum, but the evolution is unstable and does not converge to that point, (iii) with even higher learning rates the evolution is divergent.
This behavior is consistent with the analysis in \cite{catapult}, where the second phase is called \emph{catapult phase}. We will adopt the same name\footnote{ \cite{catapult} found that on certain problems with flat minima the catapult phase is best for (S)GD. See Sec.~\ref{sub: zakharov}\ for a function with shallow regions.}.

To confirm this behavior in general we performed a search in the hyperparameter space by using hyperopt with its Tree Of Parzen Estimators algorithm \cite{hyperopt}.
To do so, we picked an initial point, and used hyperopt to find the best hyperparameters for the various optimizers.
 For each optimizer we ran 500 trials, evolving for 200 steps each, using 
the lowest point found during this evolution as the measure of success.
For GDM we hyperoptimized both the learning rate $\eta\in [10^{-4},.5]$ and the value of the momentum $\mu\in [.0,1.0]$.\footnote{We also ran hyperopt a second time on the pair $\{\eta, \gamma = -\log{\mu}\}$, not finding an appreciable difference (in other experiments this improved GD e.g. compared to \cite{RGD2020}). For (S)GDM and its parameters we are using the default implementation in Pytorch.}
For BBI, we fixed a default value for the chaos-inducing hyperparameters ($T_0 = 20, N_b = 4, T_1 = 100$), set $\delta E = 2$ to overcome the initial barrier, and used hyperopt only on the step size. 
The results confirm the observations above. Indeed, for GDM to escape the initial minimum, hyperopt must increase the step-size, which then results in an erratic unstable evolution. Since hyperopt tries to avoid divergences, it will automatically select the catapult phase to escape the initial minimum and get to a lower loss. To confirm this interpretation, we also ran hyperopt again on GDM but with a more stringent upper bound on the allowed value of step-size, finding that with this constraint it is not able to escape from the initial local minimum.

After this phase of estimation of the best hyperparameters, we ran the evolution again for a longer time. 
GDM in the catapult regime keeps moving erratically, while BBI explores the rich landscape eventually finding the global minimum.
Fig.~\ref{fig:hyperopt-ackley} summarizes these results. 
\begin{figure}[ht]
\centering
\begin{tikzpicture}[x=\textwidth,y=\textwidth, every node/.style = {anchor=north west}]
\node[anchor=center] at (0.22, -0.22) {\includegraphics[width=0.8\columnwidth]{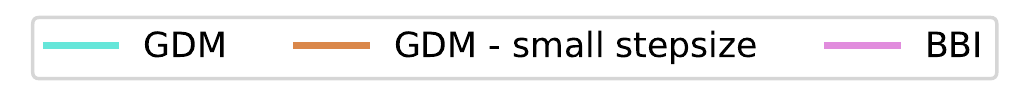}};
\node at (0, 0) {\includegraphics[width=.9\columnwidth]{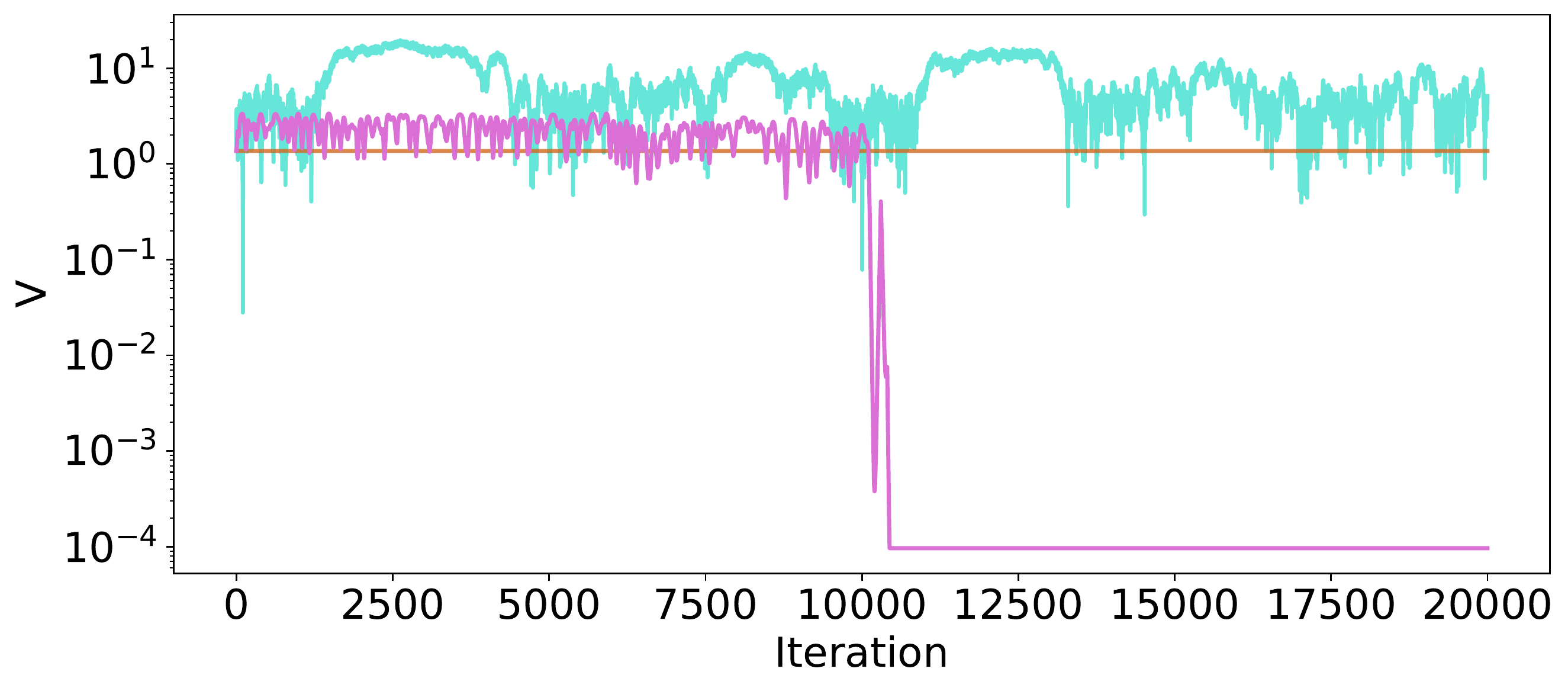}};
\end{tikzpicture}
\caption{Comparison of the optimizers on the highly non-convex Ackley function \eqref{eq:ackley}. The plot shows a typical run with the best hyperparameters as estimated with hyperopt. GDM is either stuck on the initial local minimum or it catapults itself out with a high step size, which however forbids it to converge to the global optimum. BBI explores the rich landscape eventually finding the global minimum and converging to a prescribed accuracy specified by $\Delta V$ (here $10^{-4}$).}
\label{fig:hyperopt-ackley}
\vskip -0.2in
\end{figure}

Finally, we also checked that these results are robust against changes of the initial point. 
To confirm this, we evolved from 100 randomly selected points for $(x,y)\in [-4,4]^2$ using the hyperparameters estimated above. For each of the points we ran the evolution 5 times, to take advantage of the chaotic (and non-deterministic) features of BBI, but not during the estimation phase. Doing this, we obtained that for 90/100 random initial points, BBI achieved $V<5\times10^{-4}$ within $3\times10^4$ iterations, compared to 2/100 of the times where GDM gets close to the minimum by chance during its erratic evolution.
 
\subsection{Shallow Valleys}\label{sub: zakharov}
In this section, we analyze the case where the global minimum lies in an almost-flat valley, making efficient convergence hard.  For concreteness, we focus on the Zakharov function (\cite{jamil2013literature}, Function 173), plotted in 2 dimensions in Fig.~\ref{fig:synthetic} (center), with $n$-dimensional definition
\begin{equation}
    F(\bTh)\equiv\sum_{i=1}^n \theta_i^2+ \left(\frac12 \sum_{i=1}^n i \theta_i\right)^2+\left(\frac12 \sum_{i=1}^n i \theta_i\right)^4\;.
\end{equation}
It has no local minima, but the global minimum at $\bTh=0$ lies in a nearly flat valley, slowing optimization. 
We take $n = 10$, and compare the performance of BBI and GDM with the same methodology described in Sec.~\ref{sub: ackley}. That is, we start from a fixed reference point, $(-1,...,-1)$,  and use hyperopt to estimate the best hyperparameters by evolving 500 times for 2500 iterations. For GDM  we search on the stepsize twice ($\eta \in [10^{-10}, .5]$ and $\eta \in [10^{-10}, 10^{-5}]$) and on the momentum $\mu \in [.0, 1.0]$, while for BBI we set high thresholds to turn off bounces,
set $\delta E = 0$, and search only on the step-size $\Delta t \in [10^{-6}, 10^{-2}]$.  With the best hyperparameters thus determined, we ran the evolution for $10^4$ iterations, obtaining the results in Fig.~\ref{fig:hyperopt-zakharov}.
\begin{figure}[ht]
\centering
\begin{tikzpicture}[x=\textwidth,y=\textwidth, every node/.style = {anchor=north west}]
\node[anchor=center] at (0.24, -0.22) {\includegraphics[width=0.4\columnwidth]{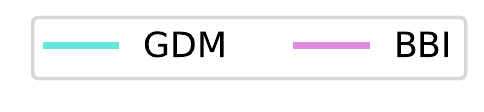}};
\node at (0, 0) {\includegraphics[width=.9\columnwidth]{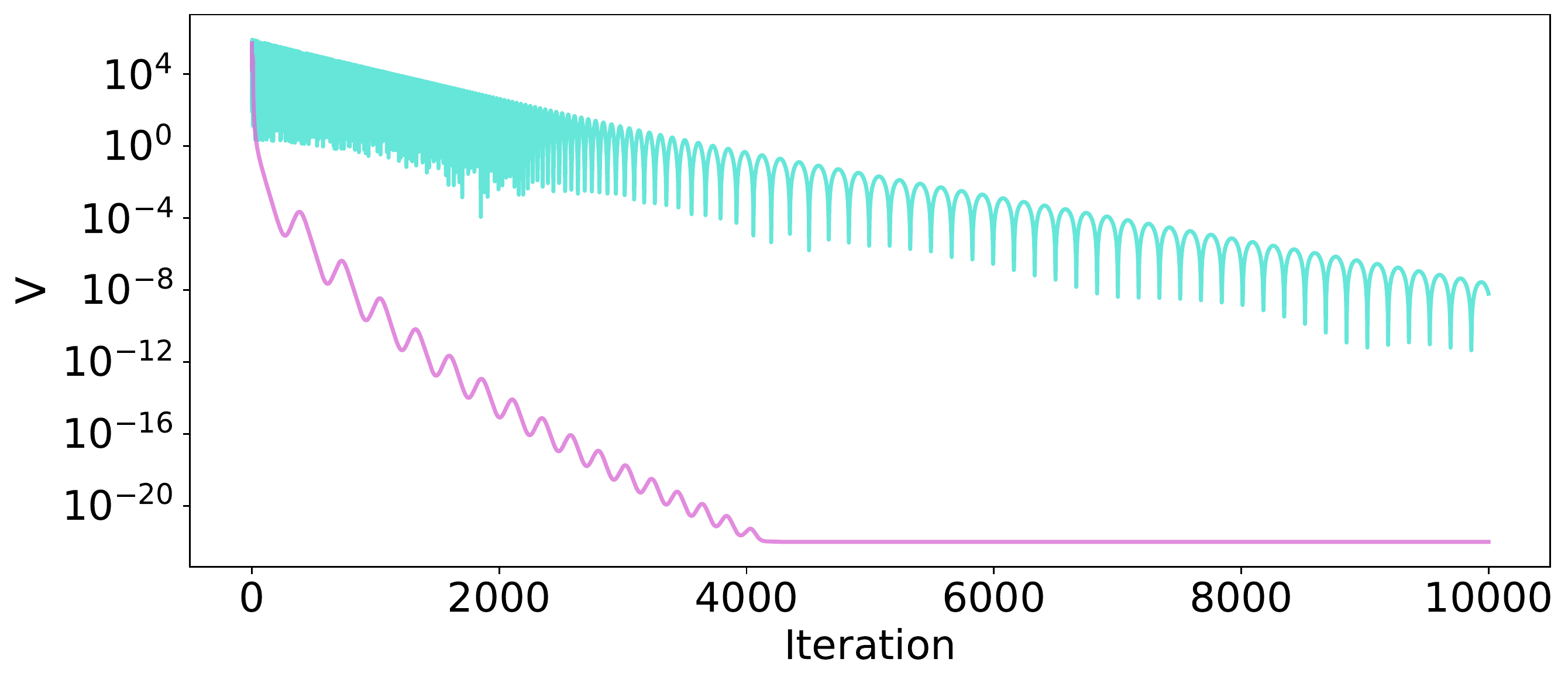}};
\end{tikzpicture}
\caption{Performance comparison on a ten-dimensional Zakharov function with the best estimated parameters. BBI proceeds faster and stops at the desired accuracy specified by $\Delta V$, here $10^{-22}$.}
\label{fig:hyperopt-zakharov}
\end{figure}
We also selected random initialization points and ran multiple evolutions with the previously estimated best hyperparameters, confirming that BBI consistently performs better than GDM.
\subsection{Two-Dimensional Multi-Basin Function}\label{sub: two-basins}

In this section we study the optimization in a simple landscape with two global minima whose basins of attraction have different shapes.  In this landscape, BI alone (without bounces) does not exhibit chaotic mixing in our experiments. We will confirm empirically that the chaotic billiards-inspired bouncing prescription in BBI is able to find multiple solutions of global optimization problems which matches the predictions from mixing dynamics.
Explicitly, we consider the function 
\begin{multline}
    \label{eq:basins-formula}
    F(\bTh) \equiv - \textrm{exp}\left(-0.4 |\bTh-\mathbf{c_1}|^2 \right)+\\
    -(1-\epsilon)\textrm{exp}\left(-0.8 |\bTh-\mathbf{c_2}|^2 \right)+\\
    +10^{-3} |\bTh-\mathbf{c_1}|^2|\bTh-\mathbf{c_2}|^2+1    \;,
\end{multline}
where $\epsilon\ll 1$ can be tuned to ensure the two minima are numerically at the same height, and $\mathbf{c_1}$ and $\mathbf{c_2}$ are the locations of the two global minima. We will use $\epsilon \sim 10^{-6}$, $\mathbf{c_1} = (-2,-2)$, $\mathbf{c_2} = (2,2)$.  
This function has two global minima for which $V = 0$ (see Fig. \ref{fig:synthetic} (right)). 
\begin{figure}[ht]
\vskip 0.2in
\begin{center}
\centerline{\includegraphics[width=0.5\columnwidth]{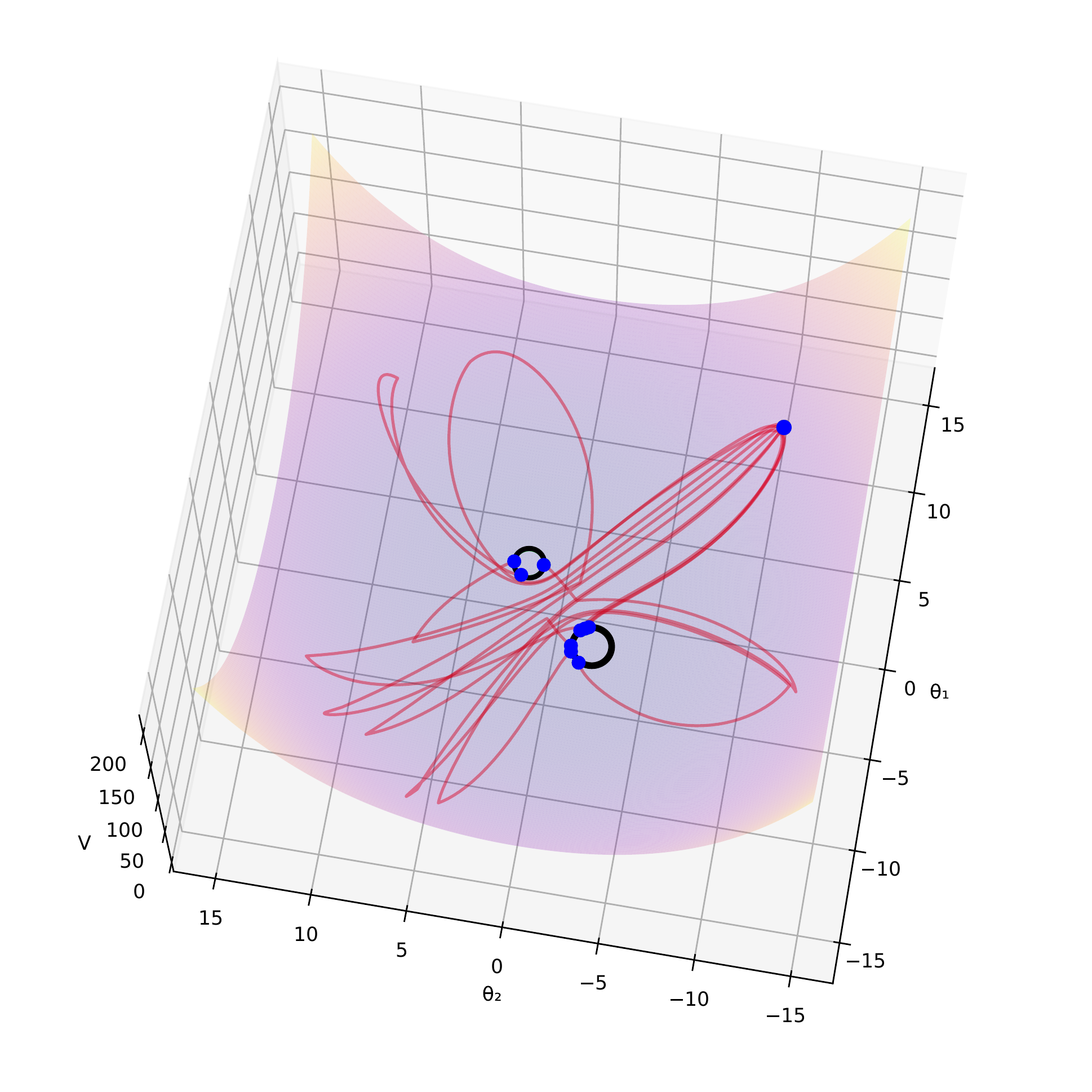}}
\caption{Ten trajectories starting from the same initial point, bouncing at the beginning. The black circles denote boundaries of the regions of basins. This shows that an initial random bounce and a small number of other bounces along the evolution are able to distribute the trajectories in the different basins.}
\label{fig:sample-traj-basins}
\end{center}
\vskip -0.2in
\end{figure}

The shapes of the basins directly enter in the volume formula \eqref{eq:vol-I-maintext} through the eigenvalues of the Hessian $H_{i j} \equiv \partial_{i j}^2 V$, $i,j = \{1,2\}$. For the potential (\ref{eq:basins-formula}) this gives
the ratio of the volumes of the two basins
\begin{equation}\label{eq:basins-ratio-th}
    {\textrm{Vol}(\mathcal{M}_1)}/{\textrm{Vol}(\mathcal{M}_2)} = 
    \sim 1.93\;,
\end{equation}
assuming the energy is held fixed. In a perfectly mixing system this ratio is also the ratio of convergence to the two basins. To test how close the bouncing prescription in the BBI algorithm brings the potential \eqref{eq:basins-formula} to a mixing system, we performed multiple evolutions and checked the ratio of times the particles get in one of the two basins. 
With $10^3$ evolutions, the partial ratios already asymptote to a value close to the theoretically predicted value (\ref{eq:basins-ratio-th}). For the example in Fig. \ref{fig:basins-partial-ratios}, the agreement is within $\sim 11\%$.
\begin{figure}[ht]
\vskip 0.2in
\begin{center}
\centerline{\includegraphics[width=\columnwidth]{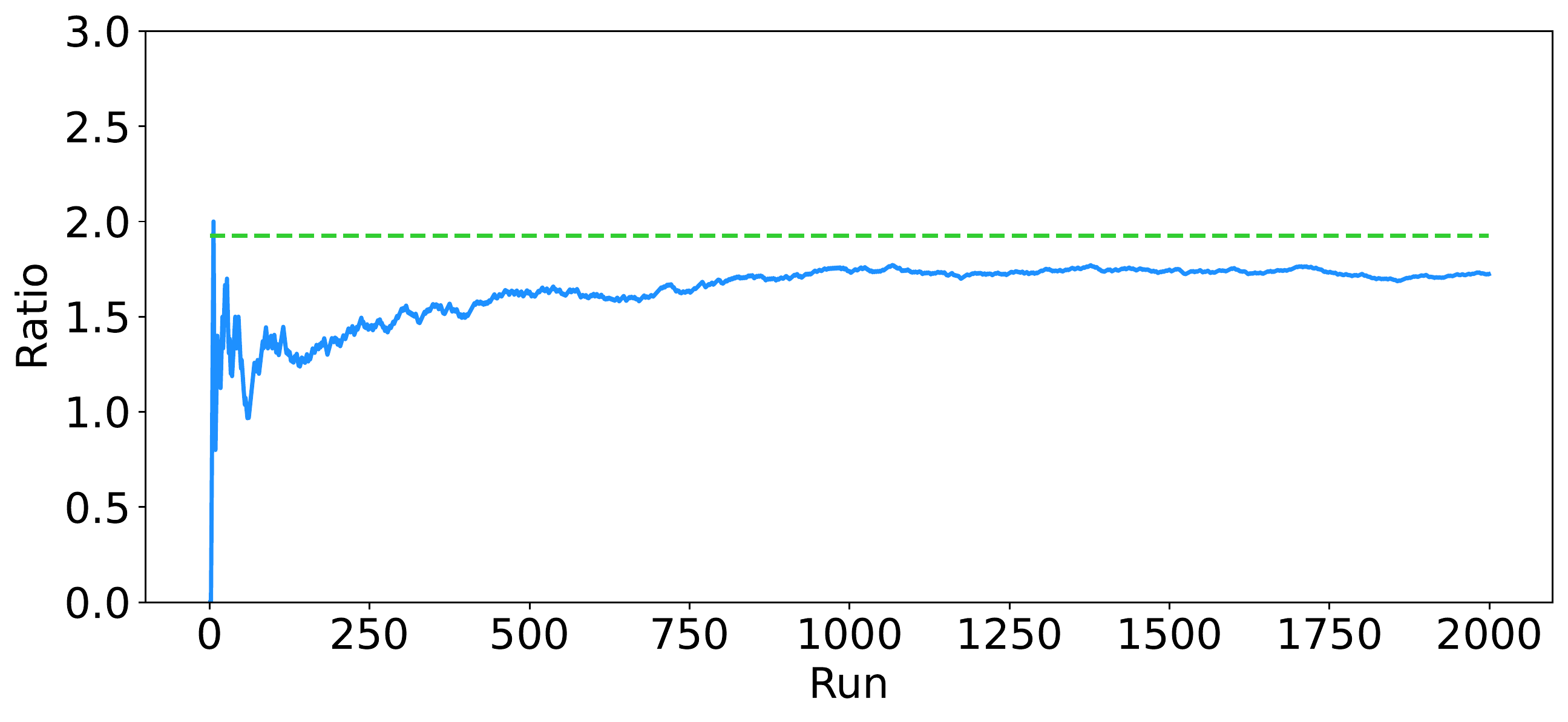}}
\caption{Partial ratios of convergence to the different basins on multiple runs (blue). All the runs start from the same initial point and are performed with $N_b = 1, \Delta t = 10^{-2}, T_1 = 750, T_2 = 20, \delta E = 0, \Delta V = 10^{-3}$. The asymptotic ratio is  within 11\% of the expected ratio for a mixing system (Eq. \eqref{eq:basins-ratio-th}, green). }
\label{fig:basins-partial-ratios}
\end{center}
\vskip -0.4in
\end{figure}

This shows that the addition of bounces introduces chaos even in a simple non-mixing potential for pure BI.  This implies that BBI can find multiple solutions of optimization problems, a result we will reproduce in \S\ref{sec:PDEs} in a PDE-solving example.
The analytic estimate for the ratio of trajectories landing on different minima \eqref{eq:vol-I-maintext}\ holds up in this simple experiment.  The formula \eqref{eq:vol-I-maintext} assumes effective mixing on a relevant timescale and requires knowledge of the geometry of the basins near each $V=V_I$.   
Since these are not a priori known in many problems, it might be hard to check for some situations. In any case, the present experiment shows that the bounces can produce approximately mixing systems such that the optimization converges to the different solutions with good control.  Here we make no comparison to GD-like optimizers, which lack a prediction of the ratio of convergence to different basins.

\subsection{PDE Solving Examples}\label{sec:PDEs}

In this section, we start the investigation of larger optimization problems, solving Partial Differential Equations (PDEs) with NNs.  PDEs are ubiquitous in science; developing efficient numerical solvers is of paramount importance. Many methods have been developed to attack this problem, including recent exploration of Machine Learning methods. (A partial list includes  \cite{ml-pde1, ml-pde2,ml-pde3,ml-pde4,ml-pde5,ml-pde6,ml-pde7,ml-pde8,ml-pde9} and \cite{ml-pde10,ml-pde11}.)
These promise more versatility, potentially avoiding the \emph{curse of dimensionality}, the exponential increase in complexity with the number of dimensions.  

The most common approach is to employ a NN as an ansatz for the solution of the PDE, using the squared PDE itself (together with its boundary conditions (BC)) as a loss function, schematically $V(\bTh)=F(\bTh)$ is given by:
\begin{equation}\label{eq:lossPDE}
    \sum_{x\in \mathcal{D}} \textrm{PDE}[\mathcal{N}(x; \bTh)]^2+\gamma \sum_{x\in \partial\mathcal{D}} \textrm{BC}[\mathcal{N}(x; \bTh)]^2+ \textrm{R}(\bTh)\,
\end{equation}
where $\mathcal{N}$ is a neural network, $\gamma$ is a fixed coefficient, R is a possible regularization term and $\mathcal{D}$ and  $\partial\mathcal{D}$ are sets of points randomly sampled from the domain of the PDE and its boundary.  We note that here the $\Delta V$ parameter \eqref{eq:SL-V-F-shift-def} is not needed and is set to 0.  
The price paid by these methods is the absence of guarantees about the convergence and accuracy. Indeed, even though NNs can represent large classes of functions with good accuracy \cite{approx-1, approx-2,approx-3, approx-4} , the fact that the training converges to those is not guaranteed (see e.g.~\cite{Adcock2021TheGB} for a recent analysis of convergence of Neural Networks to known functions), and
the optimizer is important.

In this regard, the BBI properties of not stopping until the best approximation to the solution is found and of predictably exploring different regions of the parameter space to capture multiple solutions appear very promising.  We have verified this by constructing a hard PDE problem, with two known distinct solutions, finding that BBI with a residual NN ansatz in \eqref{eq:lossPDE} successfully converges to both solutions with good accuracy consistently across hundreds of experiments.  See App.~\ref{app: PDE} for details on the PDE problem.

The randomly sampled points $x$ can be changed during the evolution, similarly to minibatches in ML problems, giving rise to a time-dependent potential $V(\bTh, t)$ as described in \S\ref{sec:minibatches}.  Doing this every epoch =1000 iterations, with $\Delta V=0,N_b=10, \Delta t= 5\times 10^{-6}, \delta E= 2\times 10^6, T_0=1000$, and $T_1=5000 $ for a small sample of 12 1500-epoch runs yields consistent solutions of the PDE, dominated by one solution (finding the other solution 1 out of 12 times).  Although small statistics, this illustrates BBI's robustness against batch-induced noise.

Indeed in BBI the effects of noise are limited by the speed limit on the variance of $\dot\bTh$ (see \S\ref{sec:minibatches} and \S\ref{app:stochastics}).  
We find multiple PDE solutions due to the bouncing (without noise) from the same initial point.  It would be interesting to quantify this analysis (c.f. \S\ref{sub: two-basins}) for these larger problems.

\subsection{Small-Scale ML Benchmarks}\label{sec:ML-problems}
Next we consider MNIST \cite{mnist} and CIFAR-10 \cite{cifar} as small ML benchmarks on which to test BBI, giving another check of whether the prescription of enforcing energy conservation described in Sec.~\ref{sec:algo} works well with minibatches (Table \ref{tab:ML}).

For MNIST we used a simple CNN having two convolutional layers with maxpool in between and a final dense layer. We took batches of size 50 and performed a grid search on the hyperpameters running for 3 epochs. For SGD we searched on the learning rate $\eta \in \{.001, .01, .05, .1, .2, .3\}$ and on the momentum $\mu \in \{ .85,.9,.95,.99,.999\}$,  while for BBI we scanned over the same learning rates $\Delta t \in \{.001, .01, .05, .1, .2, .3\}$ and on  $\delta E \in\{ .0, .1, .5, 1.0, 2.0 \}$, keeping fixed $\Delta V = 10^{-6}, T_0 = 100, N_b =5, T_1 = 1000$.
Choosing the best parameters according to their test accuracy, we performed 60 evolutions for 50 epochs, recording mean and median of the accuracy on the test set.

For CIFAR-10, we used a ResNet-18 \cite{resnet-18} and employed hyperopt to estimate the best hyperperameters by performing 50 trials for 3 epochs each. For SGD we searched both on $\eta \in [ 0.001, 0.2]$ and $\mu \in [ .8, 1.0]$ while for BBI we fixed the chaos-inducing parameters ($N_b = 100, T_0 = 100$, $T_1 =200$) and $\Delta V = \delta E = 0$ and only searched on $\Delta t \in [.001, .2]$. After the estimation phase, we evolved 16 times for 150 epochs with the best hyperparameters and computed mean and median. Both benchmarks use a standard Cross Entropy loss.
Given the small statistics, the distributions of results are not Gaussian and we only report mean and median.  
\begin{table}[t]
 \caption{Test set accuracy: \mbox{Mean , Median}.}
 \label{tab:ML}
 \vskip 0.15in
 \begin{center}
 \begin{small}
 \begin{sc}
 \begin{tabular}{lcc}
 \toprule
 Data set & SGD & BBI \\
 \midrule
  MNIST&    99.166 , 98.160 & 99.177 , 99.190 \\
CIFAR-10   & 92.628 , 92.655& 92.434 , 92.435\\
 \bottomrule
 \end{tabular}
 \end{sc}
 \end{small}
 \end{center}
 \vskip -0.1in
 \end{table}

We stress that the experiments in this section are not meant as a thorough comparison, but only as a first check that the BBI algorithm performs well on larger scale problems and is robust to the presence of minibatch. This also shows that overfitting does not appear to be a problem -- as anticipated given from the speed limit strongly slowing evolution before literally $V=0$ -- since the accuracy results on the test set are in line with SGDM. In progress are extensions to larger ML problems, analyzing ECD dynamics \eqref{eq:vol-I-maintext} in relation to properties of the loss $F(\bTh)$ \cite{imagenet}.

\section{Discussion}\label{sec:discussion}

We have established that maintaining energy conservation in the descent of the loss function $V$ is compatible with successful optimization, leading to certain analytic handles on the process that are unavailable for frictional methods (SGDM etc.).  Small-scale and benchmark numerical experiments bear out the theoretical predictions and intuitions, with some favorable properties and results compared to the standard frictional optimization. There is much more to explore particularly with larger-scale applications in order to determine the full impact of these novel features.  

An important direction is to optimize among ECD optimizers, with the guidance of \eqref{eq:vol-I-maintext}\eqref{psvolIII}.
Combining ECD with adaptive stepsizes as in e.g. \cite{kingma2017adam}\  or with weight averaging \cite{61aa9e9cc965421e82d7b7042c61abc8} -- distributed predictably as in \eqref{psvolIII}\eqref{eq:vol-I-maintext} --  may be worthwhile.
It is intriguing that the stochasticity of SGD, which introduces hard-to-characterize noise, is largely replaced in BBI by the bounces;  the relativistic speed limit constrains the variance $\langle\dot\bTh^2\rangle$ entering into minibatch-induced noise (\S\ref{app:stochastics}).  Since the randomness induced by bouncings is intrinsic (not tied to the batches), we expect that bouncing ECD algorithms are robust against adversarial attacks based on batch ordering such as \cite{shumailov2021manipulating}.
It will also be interesting to  assess the way that ECD's distinctive dynamics affects representation learning, e.g. adapting to ECD the frameworks developed in \cite{roberts2021principles} and \cite{2022arXiv220303466Y} for initialization and hyperparameter scalings with network size.  

\section*{Acknowledgements}
We are grateful to the reviewers for their stimulating comments and questions.
We would like to thank Dan Roberts for many useful discussions and for sharing with E.S. his work  with Josh Batson and Yoni Kahn connecting inflationary theory and optimization, stimulating this work.  We are very grateful to George Panagopoulos for early collaboration and numerous contributions, along with Thomas Bachlechner for fruitful intermediate collaboration.  We thank many others for useful discussions including Guy Gur-Ari, Miranda Cheng, Ethan Dyer, Alice Gatti, Daniel Kunin, Aitor Lewkowycz, Luisa Lucie-Smith, Hiranya Peiris, Andrew Pontzen,  Uros Seljak, Vasu Shyam, Kendrick Smith, Scott Tremaine, Sho Yaida, Zhiyong Zhang and participants of the 2021 Modern Inflationary Cosmology and String-Data workshops. 
Our research is supported in part by the Simons Foundation Investigator and Modern Inflationary Cosmology programs, and the National Science Foundation under grant number PHY-1720397. 
Some computing was performed on the Sherlock cluster. We thank Stanford University and the Stanford Research Computing Center for 
support and computational resources.


\nocite{langley00}

\bibliography{LogI}
\bibliographystyle{icml2022}

\newpage
\appendix
\onecolumn
\section{Appendix:  Further Details of ECD.}\label{app:theory-details}

Here we collect further details and comments on the theory of ECD and BI.  Section \ref{app:derive-eoms-S-H} spells out the derivation of the BI dynamics from the framework of Lagrangian and Hamiltonian classical mechanics in physics.  In \S\ref{app:other-ECD}\ we give additional examples of ECD dynamics. Section \ref{app:ps-vol-details} supplies details of the calculation of the phase space volume \eqref{eq:vol-I-maintext}.   Section \ref{app:stochastics} explores the sharp distinctions between the stochastic version of BI and SGDM, finding reduced diffusion in BI due to the speed limit.  This enables BBI to explore the landscape with less hard to characterize noise (whose role is replaced by the bounces of BBI).   

\subsection{BI Dynamics From the Action and Hamiltonian}\label{app:derive-eoms-S-H}

Optimization is analogous to a physical particle propagating on the loss landscape.  
In physics, the equations of classical mechanics can be efficiently derived by extremizing an {\it action functional} $S=\int dt L$ in terms of a {\it Lagrangian} $L$.  Their first-order form derives from a {\it Hamiltonian} $H$.  The absence of explicit time-dependence in the $S$ or equivalently in $H$ yields conservation of energy.   This formalism may be found in many classical mechanics textbooks, and is succinctly explained without prerequisites in e.g. Lectures 6-8 of \cite{Susskindminimum} or \cite{landau1976mechanics} Chapters 1,2,7.

We quickly review this formalism in App.~\ref{app:derive-eoms-SGDM}, with an emphasis on obtaining GDM continuum equations from a \emph{time-dependent} action.    There is a relation, known as \emph{Noether's theorem}, between conservation of energy and the absence of explicit time dependence in the specification of the physical theory.  
Conversely, time-dependence in the specification of the theory can capture the effect of the the
dissipation of energy into an additional sector of a larger energy-conserving system.
For a discussion of Noether's theorem,  see e.g.~ (\cite{landau1976mechanics}, Chapter 2).
In this appendix, we will spell out these results in the case relevant for its relation to optimization dynamics.

After explaining this in the non-relativistic case analogous to GDM, we specialize the formalism to BBI's energy-conserving dynamics in App.~\ref{app:derive-eoms-BBI}  with a \emph{time-independent} action and briefly introduce other ECD for optimization in App. ~\ref{app:other-ECD}.
\subsubsection{Actions, Hamiltonians and Gradient Descent with Momentum}\label{app:derive-eoms-SGDM}
If a mechanical system is identified by a vector of position $\bTh$ and its velocity $\dot{\bTh}$, where the dot denotes a time-derivative, its dynamics (meaning the equations governing its time evolution starting from an initial point $\bTh_0$ with initial velocity $\dot\bTh_0$) can be encoded in an \emph{action functional}
\begin{equation}
    S \equiv \int dt L(\bTh,\dot{\bTh}, t)\,,
\end{equation}
constructed in terms of a function $L(\bTh,\dot{\bTh}, t)$ called \emph{Lagrangian}.
 The time evolution of the system is then determined by the Euler-Lagrange equations
\begin{equation}\label{eq:EL}
    \frac{d}{d t} \left( \frac{\partial L}{\partial \dot{\bTh}} \right) =
\frac{\partial L}{\partial \bTh} .
\end{equation}
 which result from demanding that the variation of the action vanishes \cite{Susskindminimum}.

To gain some intuition, let us apply this formalism to the action
\begin{equation}\label{eq:SGDM}
    S_{GDM} = \int dt e^{\frac{f}{m} t} \left[ \frac{m \dot{\bTh}^2}{2} - V (\bTh)
\right]\:.
\end{equation}
The Euler-Lagrange equations \eqref{eq:EL} give
\begin{equation}\label{eq:fric1}
    m \ddot{\bTh} = -\bnabla V -f \dot{\bTh}
\end{equation}
This second-order system describes the motion of a particle of mass $m$ under the force $-\bnabla V$, slowed down by friction controlled by the coefficient $f$.

A convenient and conceptually important rewriting of the same physics in a first-order fashion can be obtained by switching to \emph{Hamiltionian} formalism. To do so, one defines the vector of \emph{momenta}
\begin{equation}\label{eq:pi}
    \bPi \equiv \frac{\partial L}{\partial \dot{\bTh}}
\end{equation}
and an \emph{Hamiltonian} function
\begin{equation}\label{eq:Ham}
H(\bTh,\bPi, t)\equiv  \bPi\dot{\bTh}-L (\bTh,\dot{\bTh}, t)   \;,
\end{equation}
where all the instances of $\dot{\bTh}$ in \eqref{eq:Ham} are substituted  for $\bPi, \bTh$ and $t$ using \eqref{eq:pi}.
In this language, the Euler-Lagrange equations \eqref{eq:EL} are translated to Hamilton-Jacobi equations
\begin{equation}
    \dot{\bTh} = \frac{\partial H}{\partial \bPi}\;,\qquad \dot{\bPi} = -\frac{\partial H}{\partial \bTh}.
\end{equation}

Applying the  Hamiltonian formalism to the GDM  action \eqref{eq:SGDM} transforms the system \eqref{eq:fric1} into
\begin{equation}\label{eq:fr2}
    \dot{\bTh} = \frac{\bPi}{m}\;,\qquad \dot{\bPi} = -\frac{f}{m}\bPi -\bnabla V . 
\end{equation}
Substituting the first equation here into the second recovers the second order form \eqref{eq:fric1}.
The system \eqref{eq:fr2} is a continuum version of the GDM algorithm, as discussed recently e.g. in \cite{RGD2020, muehlebach2021optimization} and references therein. Extra care with the map continuous $\leftrightarrow$ discrete is required when considering the effects of stochasticity added to GDM by noisy gradients (e.g.~\cite{Yaida2018arXiv181000004Y, Kunin-Ganguli-etal-DBLP:journals/corr/abs-2107-09133}  ). We will analyze the effect of noise for BBI in Sec.~\ref{app:stochastics}.

The Hamiltonian is interpreted as the total \emph{energy} $E$ of the system. 
However, in the GDM example described so far, one can easily check that it not conserved, i.e. its value changes during the evolution:
\begin{equation}
    \frac{d}{dt} H_{GDM}  \neq 0 \qquad\qquad E\; \text{ is not conserved}.
\end{equation}
Physically, energy has been lost due the heat generated by friction.
Mathematically, this non-conservation is a consequence of the Lagrangian \eqref{eq:SGDM} depending \emph{explicitly} on $t$.

\subsubsection{Energy Conserving Dynamics and BBI}\label{app:derive-eoms-BBI}

The Born-Infeld (BI) dynamics \eqref{eom}-\eqref{eomdiscrete2} can be derived at the continuum level from the action
\begin{equation}\label{DBIS}
S_{BI}=\int dt L= -\int dt V(\bTh)\sqrt{1-\frac{\dot{\bTh}^2}{V(\bTh)}} .
\end{equation}
Using \eqref{eq:pi} we derive the momentum
\begin{equation}\label{eq:pitheta}
\bPi =\frac{\partial L}{\partial \dot{\bTh}}=\frac{\dot\bTh}{\sqrt{1-\frac{\dot{\bTh}^2}{V(\bTh)}}}
\end{equation}                           
and from \eqref{eq:Ham} the Hamiltonian 
\begin{equation}\label{Ham}
H_{BI} = \bPi\dot\bTh - L=\frac{V(\bTh)}{\sqrt{1-\frac{\dot{\bTh}^2}{V(\bTh)}}}=\sqrt{V(\bTh)(V(\bTh)+\bPi^2)}.
\end{equation}
From this we extract the continuum equations of motion \eqref{eom}, which we repeat here for convenience of the reader:
\begin{equation}\label{eq:eomApp}
    \dot\bTh = \bPi \frac{V(\bTh)}{E} , \qquad
\dot\bPi =-\frac{\bnabla V}{2}\left(\frac{E}{V(\bTh)}+\frac{V(\bTh)}{E}\right),
\end{equation}
where we defined $E \equiv H_{BI}$.
As explained in the main text, a first order symplectic integration of these equations produces the discretized BI evolution.

For the BI dynamics, the Hamiltonian \eqref{Ham} is conserved, meaning that its value is fixed during the evolution generated by \eqref{eq:eomApp}:
\begin{equation}
    \frac{d}{dt} H_{BI}  = 0 \qquad\qquad E \text{ is a constant}.
\end{equation}
From eq. \eqref{Ham} we see  that the speed of motion in parameter space $|\dot\bTh|$ cannot exceed $V$, and conversely that $\dot\bTh^2$ will not vanish for $0< V< E$.   This is very different from gradient descent with or without momentum.  

\subsection{Other Examples of ECD}\label{app:other-ECD}

There is a large space of physical models with energy conserving, frictionless dynamics that slows to a stop as $V\to 0$.  Another mentioned in \S\ref{sec:intro} has action (in the language of \S\ref{app:derive-eoms-S-H}) 
\begin{equation}
    S=\int dt \left(\frac{1}{2}m(\bTh)\dot\bTh^2  \right) = \int dt \left(\frac{1}{2 V(\bTh)}\dot\bTh^2 \right)
\end{equation}
Here the objective $V\to 0$ occurs when the mass ($= 1/V$) blows up, causing the particle to slow to a stop.

For such a model the first order equations of motion read
\begin{equation}\label{eq:RL}
    \dot{\bTh} = \bPi V(\bTh)\qquad\dot{\bPi} = -\frac{E}{V(\bTh)}\bnabla V \:,
\end{equation}
descending from the Hamiltonian 
\begin{equation}
    H = \frac{1}{2} V(\bTh) \bPi^2 .
\end{equation}
A symplectic integration of the equations \eqref{eq:RL} produces another example of an optimization algorithm that conserves energy and shares the properties of BBI described in the main text.

Another example analogous to BI and also inspired by a similar theoretical cosmology model \cite{Mathis:2020vdn}\ has a logarithmic rather than a square root branch cut enforcing the speed limit:
\begin{equation}
    S = \int dt \left[ -\frac{V(\bTh)}{2}\log\left(1-\frac{\dot{\bTh}^2}{V(\bTh)}\right) \right]
\end{equation}
In all these cases, including BI, the shifted loss function $V(\bTh)$ \eqref{eq:SL-V-F-shift-def} may be replaced by any function of it that approaches 0 as $V\to 0$.  In general, there is a high-dimensional space of possible ECD models, all with the same general properties listed in Table \ref{table:comparison}.

In the context of theoretical cosmology, the classes of models of early universe inflation (accelerated expansion of the universe) which motivated ECD are distinctive in terms of their dynamics and their observational signatures.  These enable empirical discrimination between them and the GD-like dynamics of so-called slow-roll inflation. \cite{ Armendariz-Picon:1999hyi, Alishahiha:2004eh, Chen:2006nt, Cheung:2007st}.  In the present work, we similarly see sharp distinctions between frictional and ECD optimization which will be interesting to further explore and exploit.     

\subsection{Details in the Calculation of the Phase Space Volume Formula Predicting the Distribution of Results}\label{app:ps-vol-details} 
Here we supply the steps deriving \eqref{eq:vol-I-maintext} from \eqref{psvol}.  

The total volume is given by
\begin{equation}\label{psvol}
\text{Vol}({\cal M}) = \int d^n \pi d^n\theta \delta(\sqrt{V(V+\bPi^2)}-E)
\end{equation}  
Writing $|\bPi|=\tilde\pi$ and doing the angular integral in the momenta gives
\begin{equation}\label{psvolnext}
\text{Vol}({\cal M}) = \frac{2\pi^{n/2}}{\Gamma(n/2)}\int d^n\theta \int_0^\infty d\tilde\pi \tilde\pi^{n-1}\delta(\sqrt{V(V+\tilde\pi^2)}-E)\;.
\end{equation}
Using the $\delta$ function to do the $\tilde\pi$ integral then yields
\begin{equation}\label{psvolIII}
\text{Vol}({\cal M}) = \frac{2\pi^{n/2}}{\Gamma(n/2)}\int d^n\theta \frac{E}{V}\left(\frac{E^2}{V}-V \right)^{\frac{n-2}{2}} \;.
\end{equation}

 The analogous formula for pure momentum (without speed limit or friction) would be 
 \begin{equation}\label{eq:volMom}
     \text{Vol}({\cal M})\propto \int d^n\theta (E-V)^{\frac{n-2}{2}} ~~~~~ \text{frictionless~non-relativistic~momentum}
 \end{equation} and does not exhibit the dominance at low $V$ of BBI \eqref{psvolIII}.    Different forms of ECD will produces analogous formulas to \eqref{psvolIII}, sometimes enhancing the dominance at small $V$.  It will be interesting to explore the performance of such generalizations, including the replacement of $V$ with a more general function $g(V)$ of the loss. 

Let us consider the contributions to the integral near the global minimum (or degenerate minima) at some $\bTh_0$ where $V\to 0$.  We can expand the measure, 
\begin{equation}
d^n\theta = |\bTh-\bTh_0|^{n-1}d|\bTh-\bTh_0|d\Omega_{\bTh-\bTh_0}
\end{equation}
and also expand $V\propto |\bTh-\bTh_0|^\kappa$ for some $\kappa$.   From (\ref{psvolIII}) we see that the singularity there is integrable only for $\kappa < 2$.  One would expect $\kappa=2$ for a smooth potential $V$ with global minimum $V_\text{global}=0$.  This leads to a logarithmically divergent volume as we approach $V_\text{global}$.

More generally, we can compare the phase space volume in the basins of attraction of different local minima.  
Starting from (\ref{psvolIII}) and working in the regime of $V\ll E$, we can write the measure near the $I$th local (or global) minimum as 
 
\begin{equation}\label{psvol-loc-min-I}
\text{Vol}({\cal M}_I) = \frac{2\pi^{n/2}}{\Gamma(n/2)}E^{n-1}\int d^n(\theta-\theta_I) V^{-n/2}
\end{equation}
Near a minimum, $V$ is quadratic, giving (after an orthogonal diagonalization of its Hessian)
\begin{equation}\label{Vexp}
V\simeq V_I + \frac{1}{2}\sum_{i=1}^n m_{Ii}^2 (\theta_i-\theta_{Ii})^2\;.
\end{equation}
Defining
\begin{equation}\label{orthog-eta-coords}
\eta_{iI}=m_{iI}(\theta_i-\theta_{Ii}) \Rightarrow V\simeq V_I + \frac{1}{2}\sum_{i=1}^n \eta_{Ii}^2\;,
\end{equation}
we get 
\begin{equation}\label{psvol-loc-min-II}
\text{Vol}({\cal M}_I) = \frac{2\pi^{n/2}}{\Gamma(n/2)}\frac{E^{n-1}}{\prod_i m_{Ii}}\int d^n\eta V^{-n/2}=\left(\frac{2\pi^{n/2}}{\Gamma(n/2)}\right)^2\frac{E^{n-1}}{\prod_i m_{Ii}}\int d\eta\frac{ \eta^{n-1}}{ (V_I+\frac{1}{2}\eta^2)^{n/2}}\;,
\end{equation}
where $\eta \equiv |\eta_I|$ exhibiting a logarithmic divergence as $V_I\to 0$ as in \eqref{eq:vol-I-maintext}.  

Integrating $\eta$ from 0 to 1 here, as an estimate of the contribution to the measure of this $I$th minimum, gives a formula for the volume in terms of a hypergeometric function:
\begin{equation}\label{psvol-dointegral}
\text{Vol}({\cal M}_I) = \left(\frac{2\pi^{n/2}}{\Gamma(n/2)}\right)^2\frac{E^{n-1}}{\prod_i m_{Ii}}\frac{V_I^{-n/2}}{n}  ~_2F_1(\frac{n}{2}, \frac{n}{2}, \frac{n}{2}+1, -\frac{1}{2 V_I}) \;.
\end{equation}

\subsection{Further Study of BI in the Stochastic Case}\label{app:stochastics}

As noted in the main text, it is often necessary to separate the data (or the input points sampled in solving differential equations) into minibatches $x_B$, switching from batch $B$ to batch $B+1$ at time $t_B$, 
i.e. switching to a new minibatch every $\Delta (t_{B+1}-t_B)/\Delta t$ steps.  As explained in \S\ref{sec:minibatches}, the essential features of ECD persist in this case, as is borne out by the experiments.

In this section we explore in a little more detail the stochastic behavior of BI.  
Similarly to the noise-free case, we find clear distinctions between this class of optimizers and friction-based ones such as SGDM.  It would be interesting to extend these preliminary studies in the future.  We note interesting prior work \cite{Tolley_2010}\ on stochastic effects in the DBI theory of early universe inflation \cite{Alishahiha:2004eh}.

The transitions to new batches means that at a given time, the algorithm evolves in the loss landscape given by $V^B=V(\bTh; x_B)$ rather than the full loss $V(\bTh;x)$. 
The loss therefore behaves like a time dependent potential in the corresponding Hamiltonian system. In this physical system, that would generally lead to energy non-conservation by an amount determined by the strength of the time dependence, $(V^{B+1}-V^B)/V^B\equiv \Delta V_B/V^B$.  
In our algorithm, however, the situation is somewhat different since we keep the energy fixed via the rescalings described in the main text.  

To begin let us describe the rescaling required to preserve energy $E$ in the presence of minibatches (even in the absence of discretization error). From \eqref{Ham} we have $\bPi^2=\frac{E^2}{V}-V$.  Thus as we transition from batch $B$ to batch $B+1$, we have to rescale by a factor
\begin{equation}\label{eq:lambda-rescaling}
    \lambda = \frac{\pi_{i, B+1}}{\pi_{i, B}} = \sqrt{\frac{E^2/V^{B+1}-V^{B+1}}{E^2/V^B-V^B}}
\end{equation}
in order to conserve energy.

Before commenting on the small learning rate regime, we first note that the prescription of \cite{Yaida2018arXiv181000004Y}, deriving averaged correlations of observables under the assumption of a late time steady state distribution, extends to the BI case.  Following \cite{Yaida2018arXiv181000004Y}, if we assume an equilibrium distribution at late times, we can compute appropriate correlation functions in the putative distribution.  Those arising at the quadratic order in $\bTh, \bPi$ are: 
\begin{equation}
\langle [| \frac{\partial_iV^B}{V^B}\theta_j |]\rangle = \langle [| \frac{\partial_jV^B}{V^B}\theta_i |], ~~
\langle ~ V( \theta_i\pi_j + \theta_j\pi_i)~ \rangle = \Delta t E ~\langle ~  \frac{1}{4}(\partial_iV\theta_j + \partial_j V \theta_i)-[| \frac{{(V^{B })^2}}{2 E^2}\pi_i\pi_j |] ~\rangle
\end{equation}
where as in that work, $V^B$  denotes the loss function for batch $B$, $[|\dots |]$ represents a minibatch average, whereas $\langle \dots \rangle$ represents the expectation value in the steady state distribution.  Here we assumed $V^B\ll E$ to simplify the formulas.   The last relation here is comparable to equation (28) in \cite{Yaida2018arXiv181000004Y}, with the similarity clearer upon noting from \eqref{eq:pitheta}-\eqref{Ham} that the velocity is given  by $v_i=\dot\theta_i =\pi_i V/E $.  Although the relations look similar, they encode very different behavior:  Whereas in SGDM $\langle v^2\rangle$ is generally nonzero (and in ordinary Brownian motion it is approximately constant), in BI this quantity is bounded by $V$ because of the loss-dependent speed limit. 

Next, we briefly study the small $\Delta t$ regime.  By plugging the second equation in \eqref{eom} into the first
and taking into account the additional time-dependence in $V$ and in $\pi$ from the rescaling \eqref{eq:lambda-rescaling}, we obtain the second-order equations
\begin{equation}\label{eq:continuum-second-order}
    \ddot\theta_i =-\frac{1}{2}\partial_i V + \dot\theta_i (\frac{\dot\bTh\cdot\nabla V}{V})
 +  \dot\theta_i\sum_B \delta(t-t_B)\left(1-\lambda(\Delta V^B)+\frac{\Delta V^B}{V^B}\right).
\end{equation}
For the present discussion we do not include the billiards-inspired bounces prescribed in the algorithm.
All of the terms multiplying $\delta(t-t_B)$ are of order $\Delta V^B/V^B$ and may be expected to ensemble-average to zero (including when multiplied by other quantities such as $\dot\theta_i$ that are not tied to the random choice of new batch).   We could have obtained the resulting equation equally well from our discretization of BI by plugging \eqref{eomdiscrete2}\ into \eqref{eomdiscrete1}.   

We may contrast this to a similar small learning-rate limit \cite{Kunin-Ganguli-etal-DBLP:journals/corr/abs-2107-09133}
for SGDM:
\begin{equation}\label{eq:SGDM-SDE}
    \frac{\Delta t}{2}(1+\beta)\ddot\bTh +(1-\beta)\dot\bTh =-\nabla V^B
\end{equation}
derived from the update rule
\begin{equation}
    {\bf v}_{k+1}-\beta {\bf v}_k = -\nabla V_k, ~~~~ \bTh_{k+1}-\bTh_k =\Delta t~ {\bf v}_{k+1}\;.
\end{equation}
We note the absence of a $\Delta t$ in the first update rule, which is standard in machine learning but different from the standard discretization of the analogous non-relativistic classical particle model and hence different from the non-relativistic regime of our discretized BI model.

Aside from the distinction in the placement of $\Delta t$ factors, we note the presence of a friction term in \eqref{eq:SGDM-SDE}\ and its absence in \eqref{eq:continuum-second-order}.  In both cases, the batch dependence in $V(\bTh, t)$ constitutes a source of noise, in general with nontrivial statistics.  

For now, we contrast the form that Brownian motion takes in the two systems in this small-learning-rate regime.  According to \cite{Kunin-Ganguli-etal-DBLP:journals/corr/abs-2107-09133}, SGDM exhibits anomalous Brownian motion, with the variance of the motion behaving as $\langle\bTh^2\rangle \propto t^\kappa$ for some positive exponent $\kappa$.  The minimal version of Brownian motion in physics can be derived from the Langevin equation $\ddot\theta +\gamma \dot\theta =\xi$ \footnote{See e.g. \href{https://web.stanford.edu/~peastman/statmech/friction.html}{https://web.stanford.edu/~peastman/statmech/friction.html} .} where the noise term satisfies $\langle \xi(t) \xi(t')\rangle=\delta (t-t')$, which is very loosely similar to \eqref{eq:SGDM-SDE}, leading to the relation $\frac{d}{dt}\langle \theta^2 \rangle = 2 \langle \dot\theta^2 \rangle/\gamma  = \text{constant} $.    

In comparing to \eqref{eq:continuum-second-order}, although there is no friction term, we can see that for a potential basin of the form $\frac{1}{2}m^2\bTh^2$, for speed-limited motion along the gradient direction we have $\langle \dot\bTh\cdot\nabla V/V\rangle \sim -1 + \text{fluctuations} $.  Comparing to the standard Brownian motion result just summarized, we again expect $\frac{d}{dt}\langle \bTh^2 \rangle \sim 2 \langle \dot\bTh^2\rangle $.  But in contrast to that case and SGDM, for BI the speed limit strongly constrains the right hand side.  Thus we expect much less diffusion for BI.  Instead, for BI it is the billiards-inspired bounces lead to mixing over the phase space.  Clearly it would be interesting to flesh out these distinctions in more detail, along with their implications for representation learning in ML and for coverage of PDE solutions.     

\section{Details on the PDE Problems} \label{app: PDE}
Here we describe in some details the class of PDE problems on which we tested the BBI algorithm.

We considered the class of nonlinear Poisson Dirichlet problems defined by
\begin{equation}\label{eq:PDE-prob}
    \textrm{find } u \textrm{ such that}\qquad\left\{ \begin{array}{lcl}
         \Delta u+ u^2 = f& & x\in \Omega \\
         u=f_0& &x\in \partial\Omega 
    \end{array}\right.\;,
\end{equation}
where $\Omega$ is a domain in $\mathbb{R}^d$ with boundary $\partial\Omega$, $\Delta = -\sum_{i=1}^d \partial_{x_i}^2$, and $f$ is a known function.

To design a specific problem we then proceeded backwards: we chose a certain $u$ with a non-trivial shape, plugged it on the left hand sides of (\ref{eq:PDE-prob}) and computed $f$ (and $f_0$). This allowed us to know analytically one of the solutions of the problem.
Specifically, for the experiment presented in Sec.~\ref{sec:PDEs} we worked in the unit-ball in $\mathbb{R}^2$ and considered as analytic solution the function \begin{equation}\label{eq:analytic_sol_pde}
    u_\textrm{an.}  =  \sin^2(20(x_1^2+x_2^2))
\end{equation} 
of which we plot a section in Fig.~\ref{fig:sols-PDE} (left).
\begin{figure*}[ht]
    \centering
    \hfill
    \includegraphics[width=0.3\linewidth]{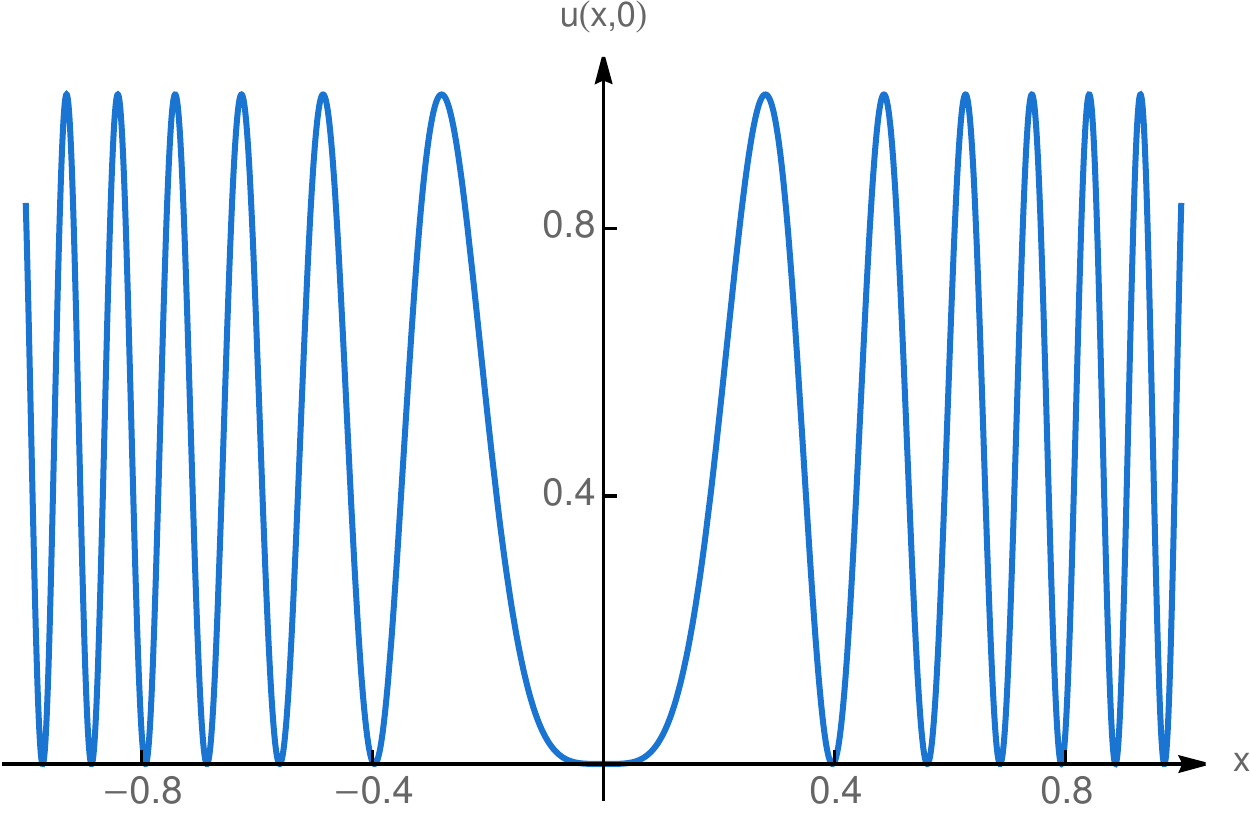}
    \hfill
    \includegraphics[width=0.3\linewidth]{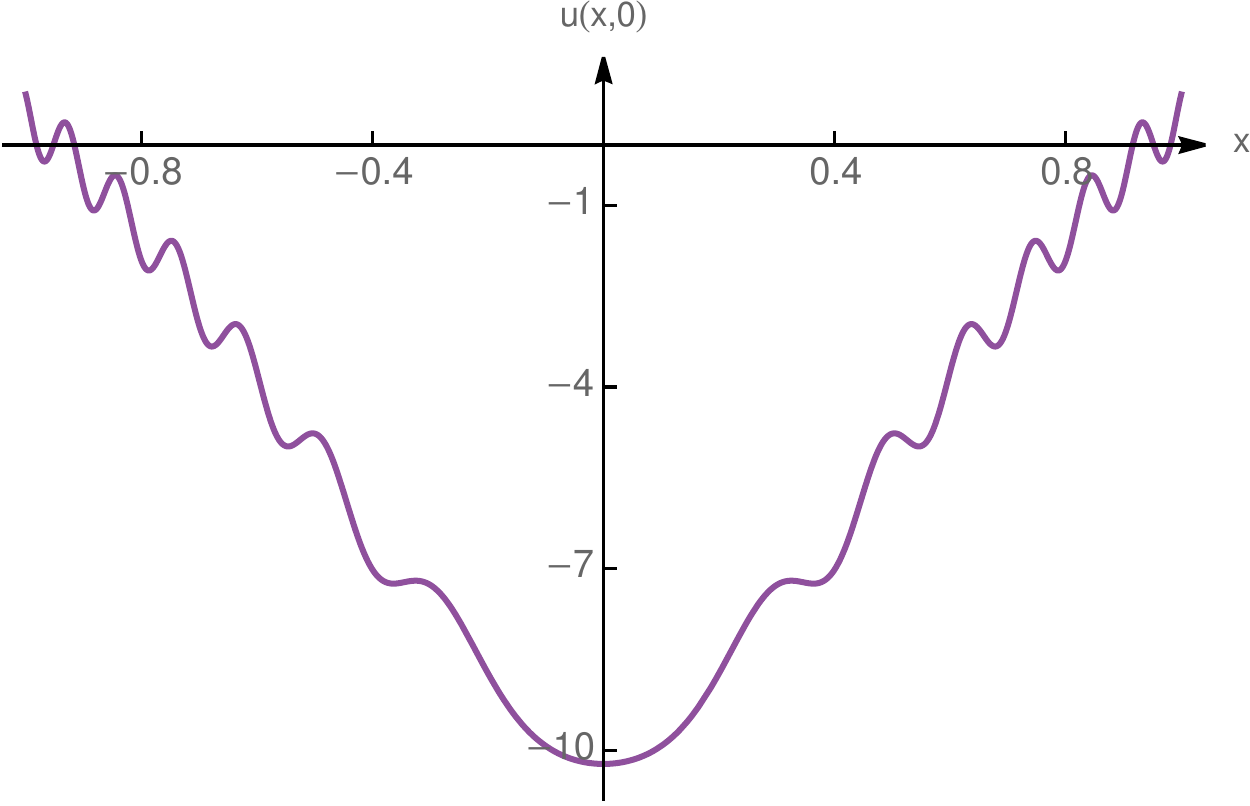}
    \hfill
    \caption{The solutions to the PDE problem studied in Sec.~\ref{sec:PDEs}. On the left a section at $x_2 = 0$ of the analytic solution \eqref{eq:analytic_sol_pde}; on the right, a section of the other solution determined numerically with a shooting method.}
    \label{fig:sols-PDE}
    \vskip -0.2in
\end{figure*}
    
We have chosen such a wiggly solution in order to obtain a relatively hard numerical problem. Plugging it in Prob. (\ref{eq:PDE-prob}) , we extract $f$ and $f_0$:
\begin{equation}\label{eq:fprob2}
    f = \frac{1}{8}\left(3-4(1+6400(x_1^2+x_2^2))\cos(40(x_1^2+x_2^2)) + \cos(80(x_1^2+x_2^2))-640\sin(40(x_1^2+x_2^2)) \right) \,
\end{equation}
and $f_0 = 0.833469$. 
The problem \eqref{eq:analytic_sol_pde} being nonlinear, we can ask whether other solutions exist. As can be shown by employing radial symmetry together with a shooting method, the answer is affirmative: two solutions exist with second one represented in Fig.~\ref{fig:sols-PDE} (right).
We stress that spherical symmetry and the reduction to a one-dimensional radial equation are used only to employ a standard method to assess the existence of other solutions, but has not been employed in any way in the solution of Prob.~(\ref{eq:PDE-prob}) with NNs, since ultimately the goal is to being able to solve higher-dimensional problems with no symmetry assumptions.

For the experiments in Sec.~\ref{sec:PDEs}, we then used as an ansatz for solving \eqref{eq:PDE-prob} a residual network defined as
\begin{equation}\label{eq:net1}
    \begin{array}{lllr}
        y_1^i& = \sigma(\sum_{\mu = 1}^{n}W_0^{i \mu} x_\mu +b_0^i)& &  \\
        y_{s+1}^i&=  y_{s}^i+\sigma( \sum_{j = 1}^{N} W_{s}^{i  j}  y_{s}^j +b_s^i)  & &\\
        y_{D+2}&=   \sum_{j = 1}^{N} W_{D+1}^{ j}  y_{D+1}^j +b_{D+1}  &&
    \end{array}
\end{equation}
where $s = 1,\dots D$ and $i = 1,\dots, N$ with $D$ and $N$ being respectively the depth and width. $d$  is the dimension of the PDE problem, and in our examples restrict to $d = 2$. As \emph{activation function} $\sigma$ we chose the logistic sigmoid.   We briefly experimented with other smooth activation functions (tanh, sin) and not observing degradation of performance, we focused on the sigmoid to show the distribution of results.
The sets of weight and biases, $W$ and $b$, define the set of parameters collectively denoted as $\bTh$ in the main text.
Summing up, the sequence of transformations in \eqref{eq:net1} is defining a function from $\mathbb{R}^d \to \mathbb{R}$, depending on these parameters, which is used as an ansatz for the PDE solution. Schematically, 
\begin{equation}
        \mathcal{N}(x;\bTh) \equiv y_{D+2}(y_{D+1} (\dots)) .
\end{equation}

In our experiments we used 3 middle layers of width 200, i.e. $D=3$ and $N=200$. The loss function \eqref{eq:lossPDE} is then constructed with an $L^2$ regularization term R$(\bTh) = 2\times 10^{-4} \bTh^2$, $\gamma = 10^{5}$ and by sampling respectively $10^{4}$ and $10^{3}$ random points from the unit ball in $\mathbb{R}^2$ and its boundary.

\end{document}